\newif\ifICML
\newtheorem{lemma}{Lemma}
\newtheorem{thm}{Theorem}
\newtheorem{prop}{Proposition}
\newtheorem{fact}{Fact}
\newtheorem{claim}{Claim}
\newcommand{\remove}[1]{}
\newcommand{\mom}[1]{{\left\vert\kern-0.25ex\left\vert\kern-0.25ex\left\vert #1 \right\vert\kern-0.25ex\right\vert\kern-0.25ex\right\vert}}
\newcommand{\pvec}{\mathbf{p}}
\newcommand{\qvec}{\mathbf{q}}
\newcommand{\xvec}{\mathbf{x}}
\newcommand{\yvec}{\mathbf{y}}
\newcommand{\cvec}{\mathbf{c}}
\newcounter{mynotes}
	\icmltitlerunning{On the price of  explainability for some clustering problems}
	\title{On the price of  explainability for some clustering problems}
	\date{}
\begin{document}

\author{  Eduardo Sany Laber  \\ PUC-Rio, Brazil \\ {\tt laber@inf.puc-rio.br} \and
Lucas Murtinho  \\PUC-Rio, Brazil  \\ {\tt lucas.murtinho@gmail.com}}


\ifICML
	\twocolumn[

	\icmltitle{On the price of explainability for some clustering problems}

	\icmlsetsymbol{equal}{*}

	
	
	
	
	\vskip 0.3in
	]
	
\else
	\maketitle
\fi


\begin{abstract}

\remove{Machine learning models and algorithms are used in a number of  systems that affect our daily life.
Thus, in some settings,  methods that are easy to explain or interpret may be highly desirable. 
The price of explainability can be thought of as the loss in terms of quality that is unavoidable if
if these systems are required to use  explainable methods.
}

The price of explainability for a clustering task can be defined  as the unavoidable loss, in terms of the 
objective function,
 if we force  the  final partition to be explainable.

Here, we study this price  for the following 
clustering problems: $k$-means, $k$-medians, $k$-centers and  maximum-spacing.
We provide upper and lower bounds for a  natural model where explainability is achieved via decision trees.  
For the $k$-means and $k$-medians problems our upper bounds improve those obtained by 
[Moshkovitz et. al, ICML 20] for low dimensions.

Another contribution is a simple and efficient algorithm for building explainable clusterings for the $k$-means problem. We provide empirical evidence that its performance is better than the current state of the art for decision-tree based explainable clustering.


\end{abstract}


\section{Introduction}
Machine learning models and algorithms have been used in a number of systems that take decisions that affect our lives. Thus, explainable methods are desirable so that people are able to have a better understanding of their behavior, which allows for comfortable use of these systems or, eventually, the questioning of their applicability. 

Although most of the work on the field of explainable machine learning has been focusing on supervised learning  \cite{ribeiro2016should, lundberg2017unified,DBLP:conf/icml/VidalS20}, there has recently been some effort to devise explainable methods for unsupervised learning tasks, in particular, for clustering \cite{dasgupta2020explainable,bertsimas2020interpretable}. We investigate the framework discussed  by \cite{dasgupta2020explainable}, where  an explainable clustering 
is given by a partition,  induced by the leaves of a decision tree, that optimizes
some predefined objective function.

 
Figure \ref{fig:exp-clustering}
shows a clustering with three groups induced by a decision tree with $3$ leaves.
As an example, the blue cluster can be explained as the set of points  that satisfy
{\tt Feature 1} $\le 70$ and {\tt Feature 2} $> 40$.
Simple explanations as this one are usually not available
for the partitions produced by popular methods such as the Lloyd's algorithm for the $k$-means problem.

\begin{center}
		\hspace{40pt} \includegraphics[scale=0.5]{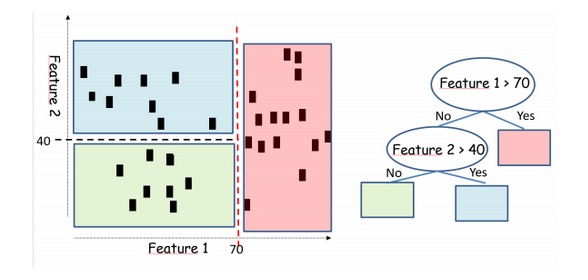}
\label{fig:exp-clustering}
	\end{center}

In order to achieve explainability,  one may be forced to accept some  loss in terms of the quality of
the chosen objective function (e.g. sum of squared distances).
In this sense, explainability has its price. 
  \cite{dasgupta2020explainable} presents theoretical  bounds on this price  for the $k$-medians and the $k$-means objective functions.

Here, we expand on their work by presenting  new bounds for these
objectives and also providing nearly tight bounds  for two other goals
that arise in relevant clustering problems, namely, the $k$-centers and the maximum-spacing problems.
We note that the objective for the latter is the one optimized by the widely known {\tt Single-Linkage} method,
employed for hierarchical clustering.  
We also give a more practice-oriented contribution
by devising  and evaluating a simple and efficient  algorithm for
building explainable clusterings for
the $k$-means problem.

\remove{
\cite{dasgupta2020explainable} present a way of approximating solutions for the $k$-means and $k$-medians clustering problems using small decision trees -- i.e., allowing only cuts that are perpendicular to the planes, and limited by previous cuts, when defining the clusters' borders. They show that ``any tree-induced clustering must in general incur an $\Omega(\log k)$ approximation factor compared to the optimal clustering'', as well as ``an efficient algorithm that produces explainable clusters using $k$ leaves'' with a constant factor approximation for $k = 2$ and, for general $k \geq 2$, 
``an $O(k)$ approximation to the optimal k-medians and an
$O(k^2)$ approximation to the optimal k-means''. Here we expand on their work by analyzing the approximation of $k$-centers clustering using decision trees.}

\subsection{Problem definition}
Let ${\cal X} $ be a set of $n$ points in $ \mathbb{R}^d$.
We say that a decision tree is {\em standard} if each internal node $v$ is associated with a
test (cut), specified by a coordinate $i_v \in [d]$ and a real value $\theta_v$, that partitions the  points
in ${\cal X}$ that reach $v$ into two sets:
those  having the  coordinate  $i_v$  smaller than or equal to $\theta_v$ and
those having  it larger than $\theta_v$. The leaves of a standard
decision tree   induce a partition of  $ \mathbb{R}^d$ into axis-aligned boxes and,
naturally, a partition of ${\cal X} $ into  clusters.

Let  $k \ge 2$ be an integer.   The clustering problems considered here consist of finding a  partition
of ${\cal X}$ into $k$ groups, among those that can be induced by a standard decision tree with $k$ leaves,
 that optimizes a given objective function.
For $k$-means, $k$-medians and $k$-centers, in addition to the partition, 
a representative $\mu(C) \in \mathbb{R}^d$ for each group $C$ must also
be output.

For the $k$-means problem the objective (cost function) to be minimized is the Sum of the Squared Euclidean Distances (SSED) between each point $\xvec \in {\cal X}$ and the representative of the cluster
where $\xvec$ lies. Mathematically, the cost (SSED) of a partition 
${\cal C}=(C_1,\ldots,C_k)$ for ${\cal X}$ is given by
$$cost({\cal C}) =\sum_{i=1}^k \sum_{\xvec \in C_i} || \xvec-\mu(C_i) ||_2^2.$$




The $k$-medians and the $k$-centers problems are  also minimization problems.
For the former, the cost of a partition ${\cal C}=(C_1,\ldots,C_k)$ is given by
$$cost({\cal C})= \sum_{i=1}^k \sum_{\xvec \in C_i} || \xvec-\mu(C_i) ||_1, $$
while  for the latter it is given by 
$$cost({\cal C}) =\max_{i=1,\ldots,k}  \max_{\xvec \in C_i} \{ || \xvec-\mu(C_i) ||_2 \} .$$




The maximum-spacing  problem is a maximization problem for which the objective to be maximized is the spacing
$sp({\cal C})$ of a partition  ${\cal C}$, 
 defined as 
$$ sp({\cal C})= \min \{ ||\xvec-\yvec||_2 : \xvec \mbox{ and } \yvec \mbox{ lie in distinct groups of ${\cal C}$}
 \}$$


We note that an optimal solution of the unrestricted version
 of any of these problems, in which the decision tree constraint is not enforced, might be a partition that is hard to explain in terms of the input features.
Thus, the motivation for using decision trees.

\remove{\cite{dasgupta2020explainable} informally define the price of explainability as "the multiplicative blowup in k-means (or k-medians) cost that is inevitable if we force our final clustering to have an interpretable form." 
A broader and more formal definition of the price of explainability
$\rho({\cal P})$ for  a clustering problem ${\cal P}$, with a minimization objective function,
 may be as follows: }

Along the lines of 
\cite{dasgupta2020explainable}, we define the price of explainability
$\rho({\cal P})$ for  a clustering problem ${\cal P}$, with a minimization objective function,
as
 $$\rho({\cal P})= \max_{I } \left \{ \frac{OPT_{exp}(I)}{OPT_{unr}(I)} \right \}, $$
where $I$ runs over all instances of ${\cal P}$;
 $ OPT_{exp}(I)$ is the cost of an optimal explainable clustering (via standard decision trees)   
for instance $I$ and  $OPT_{unr}(I)$  is the cost of an optimal 
unrestricted clustering  for $I$.
If ${\cal P}$ has a maximization objective function, then $\rho({\cal P})$
is defined as
$$ \rho({\cal P})=\max_{I } \left \{ \frac{OPT_{unr}(I)}{OPT_{exp}(I)} \right \}.$$

\remove{ 
  Let ${\cal P}$ be a clustering problem, with a minimization objective function, that falls into the discussed framework.
The price of explainability  for ${\cal P}$ in our setting  is  defined as
$$ \max_{I } \left \{ \frac{OPT_{exp}(I)}{OPT_{unr}(I)} \right \}, $$
where $I$ runs over all instances of ${\cal P}$;
 $ OPT_{exp}(I)$ is the cost of an optimal explainable (via decision trees) clustering  
for instance $I$ and  $OPT_{unr}(I)$  is the cost of an optimal 
unrestricted clustering  for $I$.
If ${\cal P}$ has a maximization objective function then the price of explainability 
is defined as
$$ \max_{I } \left \{ \frac{OPT_{unr}(I)}{OPT_{exp}(I)} \right \}.$$
}

\subsection{Our Contributions}
We provide bounds on the  price of explainability  as a function of the parameters $k,d$ and $n$
for the aforementioned  objective functions.
These objectives cover a spectrum
that includes both intra- and inter-clustering criteria as well as worst-case and average-case measures.

First, we address the $k$-centers problem. 
We show that
$$\rho(k\mbox{-centers}) \in \left \{
\begin{array}{ll}
\Omega (k^{1-1/d} ), &\mbox{ if } d \le \frac{ \ln k }{\ln \ln k}\\
\Omega \left ( \sqrt{d} \cdot \frac{k \cdot \sqrt{\ln \ln k} }{\ln ^{1.5} k} \right), & \mbox{ otherwise} 
\end{array}
\right. $$
and that
 $ \rho(k\mbox{-centers})$ is $ O( \sqrt{d} k^{1-1/d})$.
Our bounds are tight, up to constant factors, when $d$ is a constant.
For  an arbitrary $d$,  there is  only a polylogarithmic gap in $k$
between the upper and the lower bounds. The magnitude of this gap is exponentially smaller than that
of these bounds. 

\remove{
For the $k$-medians it is known that the price of explainability is
$O(k)$ and $\Omega( \log k)$ \cite{dasgupta2020explainable}. 
We contribute to the state of the art on this problem by showing that  $O(d \log k)$ is also an upper bound --
 an exponential improvement
for constant dimensions.
For the $k$-means problem, we also improve,
for low dimensions, the  $O(k^2)$ bound from \cite{dasgupta2020explainable} by proving 
that $\rho(k\mbox{-means})$ is   $O(k d \log k)$.
These upper bounds are obtained by exploiting an interesting connection with the literature of binary searching  
in the presence of non-uniform testing costs \cite{DBLP:journals/jcss/CharikarFGKRS02,DBLP:journals/siamcomp/LaberMP02}.
}

For the $k$-medians it is known that the price of explainability is
$O(k)$ and $\Omega( \log k)$ \cite{dasgupta2020explainable}. 
We contribute to the state of the art by showing that  $O(d \log k)$ is also an upper bound --
 an exponential improvement 
for constant dimensions. The upper bound follows from an interesting connection with the literature of binary searching  in the presence of non-uniform testing costs \cite{DBLP:journals/jcss/CharikarFGKRS02,DBLP:journals/siamcomp/LaberMP02}.

For the $k$-means problem, we also improve,
for low dimensions, the  $O(k^2)$ bound from \cite{dasgupta2020explainable} since we prove 
that $\rho(k\mbox{-means})$ is   $O(k d \log k)$.  Still, for the $k$-means problem,  we also  give a more practice-oriented contribution
by devising  and evaluating a simple and efficient greedy algorithm. Our method outperformed the 
{\tt IMM} method from \cite{dasgupta2020explainable} on an empirical study involving 10 real datasets. It should
be noticed that {\tt IMM} is a strong baseline since it got the best  results against 5 other competitors on the same datasets
according to \cite{dasguptaexplainable-workshop,frost2020exkmc}.

\remove{Due to the popularity of  the $k$-means problem, we also concentrate efforts on devising a practical algorithm. In fact, we propose a simple and efficient greedy algorithm that outperformed the 
{\tt IMM}  \cite{dasgupta2020explainable} on a set of  10 real datasets. It should
be noticed that {\tt IMM} obtained the best  results against 5 other baselines on these datasets
according to 
the empirical evaluation described in \cite{dasguptaexplainable-workshop,frost2020exkmc}.
 }

Finally, for  maximum-spacing we provide a tight bound by showing that the price of explainability
is $\Theta(n-k)$. The lower bound is particularly interesting since it shows that 
this objective function is bad for guiding explainable clustering, losing much more than  the
other considered objectives in the worst-case.



To derive our upper bounds, we analyze polynomial-time algorithms that start with an optimal 
$k$-clustering and transform it into an
 explainable one. The unrestricted versions of all the problems considered here, except for the  maximum-spacing problem, are {\cal NP}-Hard \cite{DBLP:journals/siamcomp/MegiddoS84,DBLP:journals/ml/AloiseDHP09}. However, all of them admit polynomial-time
algorithms with constant approximation \cite{DBLP:books/daglib/0030297,DBLP:journals/comgeo/KanungoMNPSW04}
and, hence, if we start with the partitions given by them, instead of the optimal ones, we obtain efficient
algorithms with provable approximation guarantees. These guarantees are exactly the upper bounds 
that we prove on the  price of explainability.
\ifICML
Due to  space constraints, most of the proofs  can   be found in the supplementary material.
\fi




We believe that our results are helpful for the construction of explainable clustering solutions as well as for guiding the choice of an  objective function when explainability is required.


\remove{In this research we covered a range of objective functions that includes both intra and inner clustering measures as well as worst case and average case measures.  We believe that our results, as well as those with the same flavour,
are helpful to guide the choice of an  objective function when explainability is required
in clustering tasks.
 }


\remove{
The price of explainability in the clustering setting is defined as the maximum ratio
between the cost of the optimal unrestricted clustering and the explainable clustering.

Let  ${\cal I} (n,k,d) $ is  the set of all instances with $n$ points in the
$d$-dimensional space that need to be clustered in $k$ groups.
Formally, for a minimization objective function ${\cal O}$ (e.g. $\ell_1$ norm), the price of  explainability
 $PE_{\cal O}(n,k,d)$ for input parameters  $(n,k,d)$is given by 

$$ PE_{\cal O}(n,k,d) =\max_{I \in {\cal I} (n,k,d) } \left \{ \frac{  cost^{ex,{\cal O}} (I)  }{ cost^{un,{\cal O}} (I)}  \right \}, $$
where  
$cost^{ex,{\cal O}} (I) $ and $cost^{un,{\cal O}} (I) $  are, respectively,
the cost of the optimal explainable and the optimal unrestricted clustering for
instance $I$.

We study the price of explainability for the $k-$center objective  and  
the maximum spacing. We show that
}

\subsection{Related Work}


Our research is inspired by the recent work of \cite{dasgupta2020explainable},
where they propose an  algorithm, namely   {\tt IMM},
 for building explainable clusterings, via standard decision trees, for both the $k$-means and the $k$-medians problems. At each node  {\tt IMM} selects the cut that minimizes the number of points separated from their representatives
 in a reference  clustering. Our approach for these problems, while similar, uses a significantly different strategy to build the final decision tree, based on  trees that look at a single dimension of the data.
Moreover, as mentioned before, our algorithms provide better upper bounds for low dimensions.

\remove{
In \cite{dasgupta2020explainable}, an algorithm is presented that approximates a solution to the $k$-means or the $k$-medians problem through a decision tree that selects at each node the cut that minimizes the number of elements separated from their reference centers. Our approach for these problems, while similar, uses a significantly different strategy to build the final decision tree, based on decision trees that look at a single dimension of the data. The algorithm from \cite{dasgupta2020explainable} leads to a price of explainability that is $\Omega(\log k)$ and $O(k)$ for the  $k$-medians algorithm; our strategy is $O(d\log k)$, an improvement for low dimensions.
}

Decision trees have long been associated to hierarchical agglomerative clustering (HAC), which produces a hierarchy of clusters that is usually represented by a dendrogram. Examples of models that explicitly use decision trees for HAC include \cite{fisher1987knowledge, chavent1999methodes, blockeel2000top, basak2005interpretable}. To our knowledge, the use of decision trees for non-hierarchical clustering was first suggested in \cite{liu2000clustering}, in which a standard classification tree is used to identify dense and sparse regions of data. In \cite{fraiman2013interpretable}, unsupervised binary trees are also
 used to create interpretable clusters. More recently, an approach was presented in \cite{bertsimas2020interpretable} using optimal classification trees \cite{bertsimas2017optimal}, which are built in a single step by solving a mixed-integer optimization problem. For numerical databases, \cite{loyola2020explainable} presents a decision approach that decides on a split based on both the compactness of clusters and the separation between them.

\remove{
Decision trees have long been associated to hierarchical agglomerative clustering (HAC), which produces a hierarchy of clusters that is usually represented by a dendrogram. Examples of models that explicitly use decision trees for HAC include \cite{fisher1987knowledge, chavent1999methodes, blockeel2000top, basak2005interpretable}. To our knowledge, the use of decision trees for non-hierarchical clustering was first suggested in \cite{liu2000clustering}, in which a standard classification tree is used to identify dense and sparse regions of data. In \cite{fraiman2013interpretable}, unsupervised binary trees are
 used to create interpretable clusters in three steps: constructing a maximal tree, pruning according to a dissimilarity measure, and joining similar clusters (even if they don't share the same parent). More recently, an approach was presented in \cite{bertsimas2020interpretable} using optimal classification trees \cite{bertsimas2017optimal}, which are built in a single step by solving a mixed-integer optimization problem. For numerical databases, \cite{loyola2020explainable} presents a decision approach that decides on a split based on both the compactness of clusters and the separation between them.
}


The regions of space defined by decision-tree clustering will be hyper-rectangles (some of them may also be half-spaces if the overall region of interest is unbounded). Other approaches towards building hyper-rectangular clusters can be found in \cite{pelleg2001mixtures}, with a generative model, and \cite{chen2016interpretable}, with a discriminative one. Both models allow for probabilistic (soft) clustering, and \cite{chen2016interpretable} allows for incorporating previous knowledge to the model, but neither one guarantees that the resulting clusters can be represented by decision trees.

The main reason for using a (short) decision tree to build clusters is that the results of such algorithms are easily interpretable. Other avenues towards interpretable clustering have been explored in recent years. The technique presented in \cite{inconco} is based on the information-theoretic concept of minimum description length. In \cite{saisubramanian2020balancing}, a tunable parameter (the fraction of elements in a cluster that share the same feature value) leverages the tradeoff between clustering performance and interpretability. The same tradeoff is explored in \cite{frost2020exkmc} by relaxing the requirement from \cite{dasgupta2020explainable} that the explainable clustering should be induced by a tree with no more than $k$ leafs. In \cite{horel2020explainable}, a feature selection model from \cite{horel2019computationally} is used for clustering interpretation in the field of wealth management compliance. \cite{kauffmann2019clustering} uses a two-step approach, rewriting $k$-means clustering models as neural networks and applying to these networks techniques for interpreting supervised learning models. More information regarding explainable clustering may be found in \cite{chen2018interpretable, baralisexplainable}.

Of all the works mentioned in this section, only \cite{dasgupta2020explainable} presents  approximation guarantees  with respect to the optimal unrestricted (i.e., potentially uninterpretable) solution. Two algorithms from \cite{saisubramanian2020balancing} also have an approximation guarantee, but with respect to the optimal restricted (interpretable) solution, and the definition of interpretability in that work is quite different than ours (interpretable clusters are therein defined as those in which a given proportion of points share the same value for a predefined feature of interest).

Explainability and interpretability are topics of growing interest in the machine learning community \cite{ribeiro2016should, lundberg2017unified, adadi2018peeking, rudin2019stop,murdoch2019interpretable, molnar2020interpretable}. While there has been some focus on what \cite{dasgupta2020explainable} calls \textit{post-modeling explainability}, or the ability to explain the output of a black-box model \cite{ribeiro2016should, lundberg2017unified, kauffmann2019clustering}, the practice has also been criticized in contrast with \textit{pre-modelling explainability}, or the use of interpretable models to begin with \cite{rudin2019stop}. Our present work and \cite{dasgupta2020explainable} may be considered a middle-of-the-road approach, as the end result is a fully interpretable model (instead of, for instance, a model for locally interpreting the original model, or for explaining individual predictions) based on the output from a potentially black-box model.

\remove{
\subsection{Notation and Paper Organization}
For a positive integer $m$ we use $[m]$ to denote the set of
$m$ positive integers.

The paper is organized as follows...

}

\section{On the Price of Explainability for the $k$-centers problem}
In this section we address the $k$-centers problem.
We first present   a  lower bound  by 
constructing an instance  for which the price of  explainability is high.

\subsection{Lower Bound}

Let  $p \leq \min \{d,\log_3 k\}$ be a positive integer
whose exact value will be defined later in the analysis and  let $b$ be the largest integer for which $b^p \le k$.
Note that $b \ge 3$. Moreover,  let $k'=b^p$.

Our instance $I$ has $k + k' \cdot 2d$ points. We first 
 discuss how to construct the $k$ points, referred as centers, that will be set as representatives in an unrestricted 
 $k$-clustering for $I$ that has a low cost.
The first $k'$ centers will be obtained from the representation of the 
numbers $0,\ldots,k'-1$ in  base $b$ while the remaining $k-k'$ centers will be located sufficiently far from the
others so that they will be isolated in the low-cost $k$-clustering for $I$. 
Let $\cvec^0, \ldots,\cvec^{k'-1}$ be  the first $k'$ centers.

For a number
$i \in [k'-1]$ let $(i_{p-1},\ldots,i_0)_b$ be its representation in base $b$.
For $j \in [d]$, the value of the $j$-th  component of  center $\cvec^i$ is obtained by 
applying   $(j-1)$ times a circular shift on  
 $(i_{p-1},\ldots,i_0)_b$.
The  values of the remaining $d - p$ components of $\cvec^i$ are obtained by
copying the $p$ first values $d/p$ times so that
$c^i_j=c^i_{j'}$ if $(j-j') \mod p =0$.

As an example, if $b=3$, $p=3$ and $d=9$ then $\cvec^{14}=(14,22,16,14,22,16,14,22,16)$.
In fact, since $14=(1,1,2)_3$ we have that  $c^{14}_1=(1,1,2)_3=14$;
$c^{14}_2=(2,1,1)_3=22$ and 
$c^{14}_3=(1,2,1)_3=16$. The values of $c^{14}_4,\ldots,c^{14}_9$ are
obtained by repeating the first 3 values.

\remove{
 given 
by $$c^i_1= Dec((i_{p-1},\ldots,i_0)_b) = \sum_{\ell =0}^{p-1} i_{\ell} b^\ell.$$
The second component $c^i_2$ is given by $ Dec((i_0,i_{p-1},\ldots,i_1)_b)$,
the third one, $c^i_3$,  by $ Dec((i_1,i_0,i_{p-1},\ldots,i_2)_b)$ and so on.
}

The following observation is useful for our analysis.

\begin{fact} For every $\ell \in [p]$, the
values of the $\ell$-th coordinate of the $k'$ first centers are a  permutation of 
the integers $0,\ldots,k'-1$. 
\label{prop:all-integers}
\end{fact}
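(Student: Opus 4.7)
The plan is to show that for each fixed $\ell \in [p]$, the map $i \mapsto c^i_\ell$ is a bijection from $\{0,1,\ldots,k'-1\}$ to itself. Since the codomain has exactly $k' = b^p$ elements, this is equivalent to showing that the map is injective, or equivalently surjective, and yields the desired permutation.

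First, I would note that the map $\phi: \{0,\ldots,k'-1\} \to \{0,\ldots,b-1\}^p$ sending an integer $i$ to its base-$b$ digit tuple $(i_{p-1},\ldots,i_0)$ (padded with leading zeroes if necessary) is a bijection, since every integer in $\{0,\ldots,b^p-1\}$ has a unique base-$b$ representation of length $p$. Next, the circular shift $\sigma: \{0,\ldots,b-1\}^p \to \{0,\ldots,b-1\}^p$ is clearly a bijection (its inverse is the shift in the opposite direction), hence so is its $(\ell-1)$-fold iterate $\sigma^{\ell-1}$. Finally, the map $\phi^{-1}$ turning a digit tuple back into an integer is also a bijection.

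Putting these three bijections together, the map $i \mapsto c^i_\ell = \phi^{-1}\bigl(\sigma^{\ell-1}(\phi(i))\bigr)$ is a bijection from $\{0,\ldots,k'-1\}$ onto itself. Therefore the multiset $\{c^0_\ell, c^1_\ell, \ldots, c^{k'-1}_\ell\}$ equals $\{0,1,\ldots,k'-1\}$, which is exactly the statement of the fact.

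There is really no significant obstacle here: the argument is a routine bookkeeping about base-$b$ representations and circular shifts. The only place one has to be a bit careful is in the padding convention (ensuring that every $i < k'$ is treated as a length-$p$ tuple, including those with $i < b^{p-1}$ whose natural representation is shorter), so that $\phi$ is genuinely a bijection into $\{0,\ldots,b-1\}^p$ and the circular shift is well defined.
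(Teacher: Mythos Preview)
Your argument is correct: decomposing the map $i \mapsto c^i_\ell$ as the composition $\phi^{-1} \circ \sigma^{\ell-1} \circ \phi$ of three bijections is exactly the right way to see this, and your caveat about padding is the only subtlety worth flagging. The paper itself does not supply a proof of this fact---it is simply stated as an observation---so there is no alternative argument to compare against; your write-up fills in what the authors left implicit.
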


The remaining $k-k'$ centers, as mentioned above, should be far from each other and also far  away from
the $k'$ first centers. We can achieve that by setting  $\cvec^i=k^i \mathbf{1}$  for all
 $i>k'-1$, where $\mathbf{1}$ is the unit vector in $ \mathbb{R}^d$.

The next lemma gives a lower bound on the distance between any two centers.
\begin{lemma}
\label{lem:coordinate-wise}
For any two centers  $\cvec^i$ and $\cvec^j$, 
$$||\cvec^i-\cvec^j||_2 \ge \sqrt{\lfloor d/p \rfloor} \cdot (b^{p-1}/2 ).$$ 
\end{lemma}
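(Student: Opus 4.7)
The plan is to split the argument into two cases based on whether both $i,j$ lie in $\{0,\dots,k'-1\}$ (so both centers are built from base-$b$ digit tuples) or at least one of them is $\ge k'$. The second case is immediate: for indices $\ge k'$, the centers have the form $k^{s}\mathbf{1}$, so any coordinate-wise gap is at least $k^{k'}-(k'-1)$, which easily dominates $\sqrt{\lfloor d/p\rfloor}\cdot b^{p-1}/2$ since $b^{p-1}\le b^{p}\le k$.

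For the main case, set $\delta_{s}:=i_{s}-j_{s}$ for $s\in\{0,\dots,p-1\}$; these digit differences satisfy $|\delta_{s}|\le b-1$ and, because $i\ne j$, are not all zero. I would pick $t\in\argmax_{s}|\delta_{s}|$, so $|\delta_{t}|\ge 1$. From the construction, in coordinate $c^{i}_{m}$ digit $i_{s}$ contributes $b^{(s-(m-1))\bmod p}$. Take $m^{*}\in[p]$ to be the unique value with $m^{*}-1\equiv t+1\pmod{p}$; this puts digit $t$ in the most significant slot $p-1$, while the remaining $p-1$ digit positions bijectively fill slots $0,1,\dots,p-2$.

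The heart of the proof is the reverse-triangle estimate on this single coordinate:
\[
    |c^{i}_{m^{*}}-c^{j}_{m^{*}}| \;\ge\; |\delta_{t}|\,b^{p-1} - \sum_{r=0}^{p-2}|\delta_{t}|\,b^{r} \;=\; |\delta_{t}|\cdot\frac{(b-2)\,b^{p-1}+1}{b-1} \;\ge\; \frac{b^{p-1}}{2},
\]
where I use $|\delta_{s}|\le|\delta_{t}|$ for all $s$ and the hypothesis $b\ge 3$, which gives $(b-2)/(b-1)\ge 1/2$. Finally, the repetition rule $c^{i}_{j}=c^{i}_{j'}$ whenever $(j-j')\bmod p=0$ ensures that coordinate $m^{*}$ is duplicated in at least $\lfloor d/p\rfloor$ of the $d$ positions of $\cvec^{i}$, hence
\[
    \|\cvec^{i}-\cvec^{j}\|_{2}^{2} \;\ge\; \lfloor d/p\rfloor\cdot(c^{i}_{m^{*}}-c^{j}_{m^{*}})^{2} \;\ge\; \lfloor d/p\rfloor\cdot(b^{p-1}/2)^{2},
\]
and the lemma follows by taking square roots.

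The main obstacle is to confirm that the ``leakage'' from the $p-1$ non-dominant digit positions cannot cancel the dominant $|\delta_{t}|\,b^{p-1}$ term. This is exactly where both the choice of $t$ as a maximum-magnitude position (so each leaked digit obeys $|\delta_{s}|\le|\delta_{t}|$) and the hypothesis $b\ge 3$ (noted just after the definition of $b$) enter essentially; the argument would fail for $b=2$, which is precisely why the construction insists on $b\ge 3$.
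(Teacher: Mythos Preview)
Your proof is correct and follows essentially the same approach as the paper: pick the digit position $t$ (the paper's $f$) maximizing $|i_t-j_t|$, look at the coordinate $m^{*}=[(t+1)\bmod p]+1$ where this digit sits in the top slot, bound the single-coordinate gap by $|\delta_t|\bigl(b^{p-1}-\sum_{r=0}^{p-2}b^{r}\bigr)\ge b^{p-1}/2$ using $b\ge 3$, and then use the $p$-periodic repetition to collect $\lfloor d/p\rfloor$ identical coordinate contributions. Your write-up is slightly more explicit (you spell out the algebra $\frac{(b-2)b^{p-1}+1}{b-1}\ge b^{p-1}/2$ and treat the far-away centers $\cvec^{i}=k^{i}\mathbf{1}$ concretely), but the argument is the same.
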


\begin{proof}
If one of the two centers is not among the $k'$ first centers the result clearly holds.
Thus, we assume that $i,j \le k'-1$.

It is enough to show that   there is $\ell \in [p]$ for which  $|c^i_\ell-c^j_\ell| \ge  b^{p-1}/2$.
In fact, if this inequality holds for some $\ell$ then 
$|c^i_{\ell'}-c^j_{\ell'}| \ge  b^{p-1}/2$ for
each  $\ell'$ that is congruent to $\ell$ modulo $p$.
Since there are $\lfloor d/p \rfloor $ of them, due to our construction, we get  
the desired bound.

Let  $i=(i_{p-1},\ldots,i_0)_b$ and
$j=(j_{p-1},\ldots,j_0)_b$ be the 
representations of $i$ and $j$ in base $b$, respectively.
Let $f$  be such that $|i_f-j_f|$ is maximum.



Thus, the difference between $\cvec^i$ and $\cvec^j$ in the coordinate $[(f+1) \mod p ]+1$
is at least $$|i_f- j_f| \cdot \left ( b^{p-1} - \sum_{g=0}^{p-2} b^{g} \right )  \ge b^{p-1}/2,$$
where the last inequality holds because $|i_f- j_f| \ge 1$ and $b \ge 3$. \end{proof}

\remove{
{\it Case (ii).} $|i_f- j_f| = 1$.

We start with the very specific  situation where $d$ is even
and $i_g-j_g=1$ if $g$ is odd and $i_g-j_g=-1$ if $g$ is even.
In this case, $$c^i_1 - c^j_1 \ge b^{d-1} - b^{d-2} \ge b^{d-1}/2$
In the analogous situation where  
$i_g-j_g-1$ if $d$ is still even,  $g$ is odd and $i_g-j_g=1$ if $g$ is even,
we have $c^j_1 - c^i_1 \ge b^{d-1} - b^{d-2} \ge b^{d-1}/2.$

We observe if we are not in this specific situation then there must be
 an integer $g \in [d] $ for which one of the following conditions hold:
$i_g - j_g = 1$ and $i_{g+1} - j_{g+1} \ge  0$  or (b) $g=d$,
$i_d-j_d=1$ and $i_1-j_1 >0$.
In this case, in the dimension $g$ we have that
$$c^i_g - c^j_g \ge b^{d-1} - \sum_{g=0}^{d-3} b^{g} > b^{d-1}/2$$
}


Now, we define the remaining points of instance $I$.

For each of the first $k'$ centers we create $2d$ associated points: $\xvec^{i,1},\ldots,\xvec^{i,2d}$.
For $j=1,\ldots,d$, the  point $\xvec^{i,2j-1}$  is identical to $\cvec^i$ in all coordinates
but on the $j$-th one, in which  its value  is $c^i_j-3/4$. 
Similarly, the  point $\xvec^{i,2j}$ is identical to $\cvec^i$ in all coordinates
but in the $j$-th one, in which its value is $c^i_j+3/4$. 
By considering the $k$-clustering for $I$ where the
$k$ representatives are the $k$ centers $\cvec^{0},\ldots,\cvec^{k-1}$
and each point $\xvec^{i,j}$ lies in the group of $\cvec^{i}$,
we obtain the following proposition.

\begin{prop}
\label{prop:good-proposition}
There exists an unrestricted  $k$-clustering for instance $I$ with cost $3/4$.
\end{prop}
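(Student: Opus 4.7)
The plan is to verify the claim by exhibiting the clustering explicitly described in the paragraph just before the proposition and computing its $k$-centers cost directly from the construction. First I would define the partition $\mathcal{C} = (C_0,\ldots,C_{k-1})$ by placing each center $\cvec^i$ together with all its associated points into the same group: for $i \le k'-1$, let $C_i = \{\cvec^i, \xvec^{i,1}, \ldots, \xvec^{i,2d}\}$, and for $i > k'-1$, let $C_i = \{\cvec^i\}$ be a singleton. Every one of the $k + k' \cdot 2d$ points of $I$ appears in exactly one group, so this is a valid $k$-partition. I would then set the representative $\mu(C_i) := \cvec^i$ for every $i$.

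Next I would bound, for each $i$, the maximum distance from $\mu(C_i)$ to any point of $C_i$. The center $\cvec^i$ contributes distance $0$. For $i \le k'-1$ and each $j \in [d]$, the construction stipulates that $\xvec^{i,2j-1}$ and $\xvec^{i,2j}$ agree with $\cvec^i$ in every coordinate except the $j$-th, where they differ by $\pm 3/4$. Hence
\[
\|\xvec^{i,2j-1} - \cvec^i\|_2 = \|\xvec^{i,2j} - \cvec^i\|_2 = 3/4,
\]
so the intra-cluster maximum for every non-singleton $C_i$ is exactly $3/4$, and singleton clusters contribute $0$.

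Taking the maximum over all $k$ clusters therefore yields $cost(\mathcal{C}) = 3/4$, which proves the proposition. I do not anticipate any real obstacle: the statement is a direct computation from the definitions of $\xvec^{i,j}$ and of the $k$-centers cost, and Lemma~\ref{lem:coordinate-wise} is not needed here (it will be used later, to lower-bound the cost of any explainable clustering). The only mild care is to confirm that the proposed partition is a legitimate $k$-clustering, i.e., every point of $I$ is covered exactly once, which follows immediately from how the $k + k'\cdot 2d$ points of $I$ were introduced.
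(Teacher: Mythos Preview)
Your proposal is correct and matches the paper's approach exactly: the paper proves the proposition in the paragraph immediately preceding it by taking the $k$ centers $\cvec^0,\ldots,\cvec^{k-1}$ as representatives and placing each $\xvec^{i,j}$ in the group of $\cvec^i$, yielding cost $3/4$ since every associated point differs from its center by $3/4$ in a single coordinate.
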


Now we analyse the cost of an optimal explainable clustering
for $I$. The following proposition 
is a simple consequence of Fact \ref{prop:all-integers}.

\begin{prop}
\label{prop:cut-separation}
Let $(j,\theta)$ be a cut that separates at least two  points from the 
set $A$ that includes the $k'$ first centers and its associated $k' \cdot 2d$ points.
Then, $(j,\theta)$ separates one point from its associated center.
\end{prop}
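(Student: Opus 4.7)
The plan is to establish the contrapositive: show that if the cut $(j,\theta)$ separates no associated point from its center, then it cannot separate any two points of $A$. Everything will hinge on projecting $A$ onto the $j$-th coordinate axis and exploiting Fact~\ref{prop:all-integers}.

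First I would catalogue the $j$-th coordinates of the points in $A$. For each $i \le k'-1$, the ``cluster'' consisting of $\cvec^i$ together with its $2d$ associated points projects, along coordinate $j$, to exactly three values: $c^i_j - 3/4$ (contributed by $\xvec^{i,2j-1}$), $c^i_j + 3/4$ (contributed by $\xvec^{i,2j}$), and $c^i_j$ (contributed by $\cvec^i$ and by $\xvec^{i,2\ell-1}, \xvec^{i,2\ell}$ for every $\ell \ne j$). Consequently, the cut $(j,\theta)$ fails to separate cluster $i$ from its center precisely when all three values lie on the same side of $\theta$, equivalently when $\theta \ge c^i_j + 3/4$ or $\theta < c^i_j - 3/4$. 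The ``bad'' region for cluster $i$, where some point is separated from $\cvec^i$, is therefore the half-open interval $[c^i_j - 3/4,\ c^i_j + 3/4)$.

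Next I would invoke Fact~\ref{prop:all-integers}: the values $\{c^i_j : 0 \le i \le k'-1\}$ form a permutation of $\{0,1,\ldots,k'-1\}$. Because consecutive bad intervals overlap (each interval has length $3/2 > 1$, the integer spacing), their union telescopes to the single interval $[-3/4,\ k' - 1/4)$. So if the cut $(j,\theta)$ separates no associated point from its center, then $\theta < -3/4$ or $\theta \ge k' - 1/4$.

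Finally I would close the argument by observing that every $j$-th coordinate appearing in $A$ lies in $[-3/4,\ k' - 1/4]$. Hence $\theta < -3/4$ forces all points of $A$ to lie strictly above $\theta$, and $\theta \ge k' - 1/4$ forces all points of $A$ to lie at or below $\theta$; either way, no two points of $A$ are separated. This contradicts the hypothesis, proving the contrapositive. The only substantive step is the overlapping-intervals computation that reduces the union of bad intervals to a single interval covering the whole range of projected coordinates; once Fact~\ref{prop:all-integers} is in hand, this is immediate, so no real obstacle arises.
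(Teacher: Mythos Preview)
Your proof is correct and is essentially the contrapositive of the paper's direct argument: the paper assumes $\theta$ lies in the range $(-3/4,\,k'-1+3/4)$ and then, by a short case analysis on the nearest integer to $\theta$, exhibits a center whose associated point is separated; you instead compute that the ``bad'' intervals $[m-3/4,\,m+3/4)$ for $m=0,\ldots,k'-1$ cover that same range and conclude by contraposition. The mathematical content is identical, and both arguments hinge on Fact~\ref{prop:all-integers} in the same way.
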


\ifICML

\else
\begin{proof}
Since $(j,\theta)$ separates at least two points from $A$ then
$\theta \in (-3/4,k'-1+3/4)$. 

If $\theta<0$, then  $(j,\theta)$ separates
the center that has the $j$-th coordinate equal to 0 from its associated point that has coordinate
$j$ equal to $-3/4$. If $\theta >k'-1$, then  $(j,\theta)$  separates
the center that has the $j$-th coordinate equal to 0 from its associated point that has coordinate
$j$ equal to $k'-1+3/4$.  Let $z$ be an integer that satisfies $0 \le z \le k'-2$
and such that  $\theta \in (z,z+1)$.  If
$\theta-z < 1/2$ (resp. $\theta-z > 1/2$),   $(j,\theta)$ separates the center that has the $j$-th coordinate equal
to $z$ (resp. $z+1$) from its associated point with $j$-th coordinate equal to $z+3/4$ (resp. $z+1 -3/4$).

Note that the existence of centers with the aforementioned values for coordinate $j$ is guaranteed
by Fact \ref{prop:all-integers}. 
 \end{proof}
\fi

\begin{lemma}
Any explainable $k$-clustering for instance $I$ has cost at least $\sqrt{\lfloor d/p \rfloor} \cdot ( b^{p-1}/4 )-3/8 $.
\label{lem:k-center-lower-bound}
\end{lemma}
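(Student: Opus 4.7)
My plan is to argue that, in any explainable $k$-clustering, at least one point is forced to share a cluster with a representative far from it. First observe that the $k - k'$ ``far'' centers $\cvec^{k'}, \dots, \cvec^{k-1}$ must each occupy their own leaf with no other input point: putting any two of them (or one of them together with a low-coordinate point) in the same leaf would produce cost on the order of $\sqrt{d}\,k^{k'}$, far exceeding the claimed bound. Thus $k - k'$ leaves are consumed by the far centers, and the remaining $k'$ leaves must house $A := \{\cvec^0, \dots, \cvec^{k'-1}\}$ together with its $2dk'$ associated points.

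I split on whether all centers of $A$ occupy distinct leaves. If two centers $\cvec^i, \cvec^j \in A$ sit in the same leaf, then for any representative $\mu$ chosen for that cluster, $\max(\|\cvec^i - \mu\|_2, \|\cvec^j - \mu\|_2) \ge \|\cvec^i - \cvec^j\|_2 / 2 \ge \sqrt{\lfloor d/p \rfloor}\,b^{p-1}/4$ by Lemma \ref{lem:coordinate-wise}, and the bound holds.

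Otherwise, each of the $k'$ centers of $A$ lies in its own leaf, and by the first observation every point of $A$ is confined to those $k'$ leaves. Let $w_0$ be the lowest common ancestor in the decision tree of those $k'$ leaves. All of $A$ reaches $w_0$, and the cut at $w_0$ partitions the $k'$ leaves into two non-empty groups, so it geometrically separates at least two centers of $A$. Proposition \ref{prop:cut-separation} then supplies a center-point pair $(\cvec^i, \xvec^{i,\ell})$ in $A$ lying on opposite sides of this cut. Since both reach $w_0$, they are routed to different subtrees and hence land in different leaves. The leaf containing $\xvec^{i,\ell}$ must therefore hold some other $A$-center $\cvec^{i'}$; the triangle inequality combined with Lemma \ref{lem:coordinate-wise} gives
$$\|\xvec^{i,\ell} - \cvec^{i'}\|_2 \ge \|\cvec^i - \cvec^{i'}\|_2 - \|\cvec^i - \xvec^{i,\ell}\|_2 \ge \sqrt{\lfloor d/p \rfloor}\,b^{p-1}/2 - 3/4,$$
so any representative for that cluster has max distance at least half this quantity, matching the claimed lower bound.

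The main subtle point is the choice of $w_0$. Proposition \ref{prop:cut-separation} delivers only a geometrically separated pair, but we need this to manifest as a separation into distinct leaves of the tree. Taking $w_0$ as the LCA of the $k'$ leaves holding $A$-centers is precisely what guarantees that both members of the Proposition's pair reach the cut at $w_0$, so geometric separation at $w_0$ propagates to leaf separation. This is the step I expect to require the most care.
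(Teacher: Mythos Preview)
Your argument is correct and follows essentially the same approach as the paper: case-split on whether two centers share a leaf, and otherwise invoke Proposition~\ref{prop:cut-separation} at a suitable node to produce a point that lands in the cluster of a foreign center. Your preliminary isolation of the far centers is unnecessary---the paper simply splits on whether \emph{any} two centers share a cluster and handles all pairs uniformly via Lemma~\ref{lem:coordinate-wise}---but your explicit choice of $w_0$ as the LCA is a genuine tightening: the paper merely says ``a cut that satisfies the condition of Proposition~\ref{prop:cut-separation}'' and asserts that the separated point ends up with some $\cvec^j$, $j\neq i$, without justifying that both the point and its center actually reach the node where that cut is applied; your LCA argument supplies exactly that missing justification.
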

\begin{proof}
Let ${\cal C}$ be an explainable $k$-clustering for instance $I$.
It is enough to show that there is a cluster $C \in {\cal C}$ that
contains two points, say $\xvec$ and $\yvec$, for which $$||\xvec-\yvec||_2 \ge \sqrt{\lfloor d/p \rfloor} \cdot ( b^{p-1}/2 )-3/4. $$
In fact,  in this case, due to the triangle inequality, for any choice of the representative for $C$, either
$\xvec$ or $\yvec$ will be at distance at least $\sqrt{\lfloor d/p \rfloor} \cdot ( b^{p-1}/4 )-3/8 $ from 
it.

If two centers lie in the same cluster of ${\cal C}$ then it follows from 
Lemma \ref{lem:coordinate-wise} that their distance
is at least  $\sqrt{\lfloor d/p \rfloor} \cdot ( b^{p-1}/2)$.


On the other hand, if every center lies on a different cluster in ${\cal C}$ then let $\xvec$ be the point that was separated
from its center, say $\cvec^i$, by a cut that satisfies the condition of Proposition 
\ref{prop:cut-separation}. 
Then, $\xvec$ lies in the same  cluster of $\cvec^j$, for some $j \ne i$.
From the triangle inequality we have that 
$$  ||\cvec^i -\cvec^j||_2  \le ||\cvec^i -\xvec||_2 +  ||\cvec^j -\xvec||_2.$$
Hence,  $||\cvec^j -\xvec||_2 \ge   \sqrt{\lfloor d/p \rfloor} \cdot ( b^{p-1}/2) -3/4$.
 \end{proof}

\remove{
\begin{thm}
The price of explainability for the $k$-centers problem is $\Omega (k^{1-1/d} )$ 
$d < \frac{ \log k }{\log \log k} $ and 
$$\Omega \left ( \sqrt{d} \cdot \frac{k \cdot \sqrt{\log \log k} }{\log^{1.5} k} \right), $$
if $d > \frac{ \log k }{\log \log k}$
\label{thm:lower-bound-kcenter}
\end{thm}
}

By putting together Proposition \ref{prop:good-proposition} and Lemma \ref{lem:k-center-lower-bound} and, then, optimizing
the value of $p$ we
obtain the  following theorem.

\begin{thm}
The price of explainability for the $k$-centers problem satisfies
$$\rho(k\mbox{-center}) \in \left \{
\begin{array}{ll}
\Omega (k^{1-1/d} ), &\mbox{ if } d \le \frac{ \ln k }{\ln \ln k}\\
\Omega \left ( \sqrt{d} \cdot \frac{k \cdot \sqrt{\ln \ln k} }{\ln^{1.5} k} \right), & \mbox{ otherwise.} 
\end{array}
\right. $$
\label{thm:lower-bound-kcenter}
\end{thm}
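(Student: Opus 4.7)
The plan is to combine Proposition~3 (which furnishes an unrestricted clustering of cost $3/4$) with Lemma~3 (which lower bounds the cost of any explainable clustering by $\sqrt{\lfloor d/p\rfloor}\,b^{p-1}/4-3/8$) to obtain, for every integer $p$ in $[1,\min\{d,\log_3 k\}]$, the parametric lower bound
$$\rho(k\text{-center})=\Omega\!\left(\sqrt{d/p}\cdot b^{p-1}\right),$$
and then to choose $p$ optimally in each of the two regimes. The constraint $p\le\log_3 k$ is precisely what ensures $b\ge 3$, the standing hypothesis of Lemma~1, and $p\le d$ is required by the construction.

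The next step is to relate $b^{p-1}$ to $k$. Since $b=\lfloor k^{1/p}\rfloor\ge k^{1/p}-1$, we have
$$b^{p-1}\ge k^{(p-1)/p}\,(1-k^{-1/p})^{p-1},$$
so the goal is to pick $p$ so that $k^{-1/p}=O(1/\ln k)$, which forces the slack $(1-k^{-1/p})^{p-1}$ to be $1-o(1)$. In Case~1 ($d\le \ln k/\ln\ln k$) I take $p=d$, admissible because $\log_3 k\ge \ln k/\ln\ln k\ge d$ for $k$ large; the factor $\sqrt{d/p}$ equals $1$ and $k^{-1/d}\le 1/\ln k$, so the bound collapses to the desired $\Omega(k^{1-1/d})$. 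In Case~2 ($d>\ln k/\ln\ln k$) I take $p=\lceil \ln k/\ln\ln k\rceil$, which is still at most both $d$ and $\log_3 k$; then $k^{1/p}=\Theta(\ln k)$ gives $b^{p-1}=\Theta(k/\ln k)$, while $\sqrt{d/p}=\Theta\!\left(\sqrt{d\ln\ln k/\ln k}\right)$, and multiplying the two recovers $\Omega\!\left(\sqrt{d}\cdot k\sqrt{\ln\ln k}/\ln^{1.5}k\right)$.

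The main obstacle is pure bookkeeping rather than any structural issue: one has to check that the slack $(1-k^{-1/p})^{p-1}$ really is $1-o(1)$ for the chosen $p$ (a routine application of $\ln(1-x)\ge -2x$ for small $x$), and that the additive $-3/8$ in Lemma~3 is dominated by the leading term (which is immediate from $b^{p-1}\ge 3^{p-1}\ge 3$ whenever $p\ge 2$). The discipline required is simply to verify the integrality constraints ($p$ integer, $b=\lfloor k^{1/p}\rfloor$) and the admissibility conditions $p\le d$ and $p\le \log_3 k$ in each of the two cases.
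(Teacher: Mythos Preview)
Your proposal is correct and follows essentially the same route as the paper: combine the unrestricted cost of $3/4$ with the explainable lower bound $\sqrt{\lfloor d/p\rfloor}\,b^{p-1}/4-3/8$, then optimize $p$ by taking $p=d$ in the low-dimensional regime and $p\approx \ln k/\ln\ln k$ otherwise. The only cosmetic difference is that the paper controls the gap between $b^{p-1}$ and $k^{(p-1)/p}$ via $k'>k/\exp(p/b)$ together with $b>p-1$, whereas you bound $(1-k^{-1/p})^{p-1}$ directly; both arguments yield the same $\Theta(1)$ slack.
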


\ifICML
\else
\begin{proof}
Proposition \ref{prop:good-proposition} assures the existence of a $k$-clustering of   cost $3/4$ for instance $I$.
Let ${\cal C}$ be an  explainable clustering for $I$
and recall that $b^{p} = k'$.
It follows from the previous lemma that 
$$cost({\cal C}) \ge \sqrt{\frac{d}{p}} \cdot \frac{ b^{p-1}}{4} -3/8 = \sqrt{\frac{d}{p}} \cdot \frac{ (k')^{\frac{p-1}{p} } }{4} -3/8. $$

Since  $(b+1)^{p}>k$ we have
$$k' > \frac{ k}{ (1+1/b)^p} > \frac{ k}{ \exp(p/b)}   .$$
Thus,  
$$ cost({\cal C}) \ge  \sqrt{\frac{d}{p}} \cdot \frac{ k^{\frac{p-1}{p} } }{4\exp((p-1)/b)} -3/8. $$

Now we set $p=d$ if $d \le \frac{ \ln k }{\ln \ln k}$
and $p= \frac{ \ln k }{ \ln \ln k}$, otherwise.
Since $b>k^{1/p}-1$ we have that $b > \ln k  - 1 > p-1 $ for both cases and, hence,
$$ cost({\cal C}) \ge  \sqrt{\frac{d}{p}} \cdot \frac{ k^{\frac{p-1}{p} } }{4} -3/8. $$
By replacing $p$ in the previous equation according to each of the cases we obtain the desired result.
\end{proof}
\fi


\remove{
We obtained so far an instance for the case in which $d \le \lfloor \frac{\log k}{ \log \log k} \rfloor$.
If $d > \lfloor \frac{\log k}{ \log \log k} \rfloor$ we create an instance $I'$ in which the centers
are identical to centers of instance $I$ with respect to the first 
$ \lfloor \frac{\log k}{ \log \log k} \rfloor$ coordinates. 
The value of the $\ell$-th coordinate of a center $c^i$ in the new instance,
when $\ell > \lfloor \frac{\log k}{ \log \log k} \rfloor$ is the value of the
 $(\ell \mod \lfloor \frac{\log k}{ \log \log k} \rfloor) +1$ coordinate of center $c^i$ in the instance $I$.
Thus, the center $c^i$  of the new instance is obtained by making $d / \frac{\log k}{ \log \log k}$ copies of the values 
of the coordinates of $c^i$ in instance $I$.
The consequence is that ant two centers will differ by at least $\frac{k}{\log k} $ in at least
$d/ \lfloor \frac{\log k}{ \log \log k} \lfloor$ coordinates and, hence, the distance between any 2 centers
is at least 

$$ \sqrt{ \frac{d}{\lfloor \frac{\log k}{ \log \log k} \rfloor } }  \frac{k}{\log k}$$

The points associated with centers in the new instance are constructed exactly as in  instance 
$I$.  
}


\subsection{Upper bound}

In this section we show that the price
of explainability for the $k$-center problem is $O\left( \sqrt{d}k^{\frac{d-1}{d}} \right)$. 
Note that, for constant $d$, the upper bound matches the lower bound given by Theorem \ref{thm:lower-bound-kcenter}.

To obtain the upper bound we analyze the cost of the explainable clustering
induced by the decision tree built by the  algorithm presented in  
Algorithm \ref{alg:k-center}.

The algorithm has access to  the set of representatives of  an optimal
$k$-clustering ${\cal C}^*$ for ${\cal X}$. These representatives are used as {\em reference centers}
for the points in ${\cal X}$, that is, the reference center of a point $\xvec$ is the representative
of  $\xvec$'s group in ${\cal C}^*$.

Let ${\cal X}' $ and $S$ be, respectively, the subset of points in ${\cal X}$ and the set of  reference centers that reach a given node $u$.
To split $u$,  as long as  it is possible, the algorithm applies an axis-aligned
cut that does not separate any point $\xvec \in {\cal X}'$ from its reference center. 
This type of cut is referred as a {\em clean cut} with respect to  $({\cal X}',S)$.  
When there is no such  cut available for  $u$, the algorithm
partitions the bounding box of the points in ${\cal X}' \cup S$ into  $\lfloor |S|^{1/d} \rfloor^{d}$ axis-aligned boxes of the same dimensions
by using a decision tree that emulates a grid.
By the  bounding box of ${\cal X}' \cup S$ we mean the smallest box (hyper-rectangle) 
with axis-aligned sides that includes the points in ${\cal X}' \cup S$. 


\remove{

The algorithm receives the set of representatives of  an optimal
$k$-clustering ${\cal C}^*$ for ${\cal X}$. 
As long as it is possible, the algorithm applies  axis-aligned
cuts that do not separate the points in ${\cal X}$ from  their reference centers.
By the reference center of a point $\xvec$ we mean its representative in ${\cal C}^*$. When there is no such  cut available for some node $u$, the algorithm
partitions the bounding box of the points reaching $u$ into boxes of the same dimensions
using a decision tree that emulates a grid.

More precisely,  let $u$ be the current node of the decision tree under construction,
 let ${\cal X}^u$ be the set of points that reach $u$ and let $S^u$ be set of reference centers that reach $u$. 
 Initially, $u$ is the root of the tree, ${\cal X}^u={\cal X}$
 and $S^u$ is the set containing all reference centers.

The algorithm uses the notion of a {\em clean cut}.
We say that an axis-aligned cut is clean w.r.t ${\cal X}^u$ if it satisfies the following properties: (i) it  separates at least two reference centers  in $S^u$;
(ii) it does not  separate any point in ${\cal X}^u$ from its reference center.  

When there is a clean cut with respect to ${\cal X}^u$, the algorithm applies it and then recurses on each of the two subsets of ${\cal X}^u$ induced by the cut. Otherwise, let 
$H$ be  the bounding box of 
${\cal X}^u$, that is, the smallest box (hyper-rectangle)  with axis-aligned sides that
includes all points in ${\cal X}^u$. Moreover,  
let $p=\lfloor s^{1/d} \rfloor $, where $s=|S^u|$. 
If there is no clean cut with respect to ${\cal X}^u$, the algorithm employs a  decision tree $D^u$ to partition $H$ into 
$p^d$ identical axis-aligned boxes, so that the points of each of them
 are associated with a  distinct leaf of $D^u$.
}

\remove{
To build a tree, for each dimension $i \in [d]$, let $min_i$ and $max_i$ be, respectively,
the minimum and the maximum value of a coordinate $i$ for a point in ${\cal X}^u$.
In order to split $H$, for each $i \in [d]$, $p-1$ cuts are used along the dimension $i$,
where the $j$-th one separates points with coordinate $i$ smaller than  $min_i+(j-1)(max_i-min_i)/(p-1)$
from those with that coordinate  larger than $min_i+(j-1)(max_i-min_i)/(p-1)$.
}


\begin{algorithm}[H]
  \caption{{\tt Ex-kCenter}( ${\cal X}'$: set of points)}

   \begin{algorithmic}[]
  	
  	\small
 	    \STATE  $S \leftarrow $ reference centers of the points in ${\cal X}'$
		\IF{$|S|=1$ }
          	\STATE Return ${\cal X}'$ and the single reference center in $S$ 
		\ELSE
        \IF{there exists a clean cut  w.r.t. $({\cal X}',S)$}
        
          \STATE $({\cal X}'_L,{\cal X}'_R)$ $\leftarrow $ partition  induced by the clean cut 
 
    	    \STATE Create a node $u$ 
   	      
    	      \STATE $u$.{\tt LeftChild} $\leftarrow ${\tt Ex-kCenter}(${\cal X}'_L)$ 
    	      
    	      \STATE $u$.{\tt RightChild} $\leftarrow ${\tt Ex-kCenter}(${\cal X}'_R)$ 

           \STATE Return the tree rooted at $u$
	    
	    \ELSE  
           \STATE $H \leftarrow $ bounding box for ${\cal X}' \cup S$ 
         

            \STATE $D^u \leftarrow $ decision tree that partitions $H$ into $ \lfloor |S|^{1/d} \rfloor^d$ identical axis-aligned boxes
             

            
            

           \STATE Return  $D^u$ as well as an arbitrarily chosen representative for each  of its leaves 
 
  
        \ENDIF
		
	   \ENDIF	

  \end{algorithmic}
  \label{alg:k-center}
\end{algorithm}

\begin{thm}
    The price of explainability for $k$-centers is  $O\left( \sqrt{d}k^{1-1/d} \right)$.
\end{thm}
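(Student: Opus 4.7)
Let $r := \mathrm{OPT}_{\mathrm{unr}}(I)$ be the cost of an optimal unrestricted $k$-clustering $\mathcal{C}^*$ used as reference. By definition, every point $\xvec \in \mathcal{X}$ lies within Euclidean distance $r$ from its reference center, so for every coordinate $i \in [d]$, the projection of each cluster of $\mathcal{C}^*$ onto axis $i$ is contained in an interval of length at most $2r$ centered at the corresponding coordinate of the reference center. The plan is to bound the cost of every leaf produced by \texttt{Ex-kCenter} and take the maximum.

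The leaves come in two flavors. First, any leaf produced in the $|S|=1$ branch inherits a single reference center and hence has cost at most $r$, since all its points belong to the cluster of that center in $\mathcal{C}^*$. The nontrivial case is a leaf produced by the grid step at some node $u$ with reference set $S$ of size $s = |S| \ge 2$. Let $H$ be the bounding box of $\mathcal{X}' \cup S$ at $u$, with side lengths $L_1,\dots,L_d$; the grid partitions $H$ into $p^d$ identical boxes, where $p = \lfloor s^{1/d}\rfloor$, each of diameter $\sqrt{\sum_i (L_i/p)^2} \le \sqrt{d}\,\max_i L_i / p$. Since the representative chosen for each grid cell lies inside $H$, the $k$-centers cost contributed by that leaf is at most this cell diameter.

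The key step is to bound $L_i$ in terms of $r$ and $s$. Fix a coordinate $i$ and sort the centers in $S$ so that $c^{(1)}_i \le \dots \le c^{(s)}_i$. I claim that $c^{(s)}_i - c^{(1)}_i \le 2rs$. Indeed, since $u$ admits no clean cut, any threshold $\theta \in (c^{(1)}_i, c^{(s)}_i)$ that is strictly between two consecutive distinct center coordinates must separate some point $\xvec \in \mathcal{X}'$ from its reference center $c \in S$. Such a $\theta$ must therefore lie in the interval $J(c) := [c_i - r,\, c_i + r]$, for otherwise $\xvec$ (which satisfies $|x_i - c_i| \le r$) would be on the same side of $\theta$ as $c$. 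Hence, up to the finite set $\{c^{(1)}_i,\dots,c^{(s)}_i\}$ of measure zero, the interval $(c^{(1)}_i, c^{(s)}_i)$ is covered by $\bigcup_{c \in S} J(c)$, whose total Lebesgue measure is at most $2rs$. Since the points of $\mathcal{X}'$ can extend at most $r$ below $c^{(1)}_i$ and above $c^{(s)}_i$, we obtain $L_i \le 2rs + 2r \le 4rs$.

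Plugging this into the cell-diameter bound gives a per-leaf cost of at most $\sqrt{d}\cdot 4rs/\lfloor s^{1/d}\rfloor$. Using $\lfloor s^{1/d}\rfloor \ge s^{1/d}/2$ for $s \ge 1$, the per-leaf cost is $O(\sqrt{d}\,r\,s^{1-1/d}) = O(\sqrt{d}\,r\,k^{1-1/d})$ since $s \le k$. Combined with the trivial $\le r$ bound for the single-center leaves, every leaf of the tree has cost $O(\sqrt{d}\,r\,k^{1-1/d})$, so the $k$-centers cost of the tree is $O(\sqrt{d}\,k^{1-1/d})\cdot\mathrm{OPT}_{\mathrm{unr}}(I)$, establishing the claimed bound on $\rho(k\text{-centers})$. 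The main obstacle is the covering argument in the third paragraph; everything else is a direct calculation. A minor bookkeeping point is to check that the total number of leaves produced is at most $k$, which follows because a grid step at a node with $s$ centers creates $\lfloor s^{1/d}\rfloor^d \le s$ leaves, and clean cuts partition $S$ exactly.
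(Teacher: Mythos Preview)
Your proof is correct and follows essentially the same approach as the paper: both argue that when no clean cut is available at a node with $s$ centers, a covering argument forces each side of the bounding box to have length $O(s\cdot\mathrm{OPT})$, so the grid cells have diameter $O(\sqrt{d}\,s^{1-1/d}\cdot\mathrm{OPT})$. One small slip: you say the representative ``lies inside $H$'', but to bound the leaf's cost by the cell diameter you need it inside the cell itself (this is what the paper also implicitly assumes).
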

\begin{proof}
We  argue that for each leaf $\ell$ of the tree ${\cal D}$  built by 
{\tt Ex-kCenter}(${\cal X}$),
the maximum distance between a point in $\ell$ and its representative  is
$ OPT  \sqrt{d}k^{1- 1/d}$,
where $OPT$ is the cost of the optimal unrestricted clustering.

We split the proof into two cases. The first case addresses the scenario in which
only clean cuts are used in the path from the root of ${\cal D}$ to the  leaf $\ell$.
The second case addresses the remaining scenarios.

\remove{
We split the proof into two cases. The first case addresses the scenario in which a leaf $\ell$
is  obtained  due to the condition $|S|=1$ (first condition of  the algorithm) while the second case handles
the scenario in which  $\ell$ is a  leaf in $D^u$.
}

\medskip

{\it Case 1.}
In this case all points that reach  $\ell$
lie in the same cluster of the optimal unrestricted $k$-clustering  ${\cal C}^*$.  Thus, the maximum distance from a point in $\ell$ to 
the single reference center in $S$ is upper bounded
by $OPT$.

\medskip

{\it Case 2.}
Let $u$ be the first node in the path from the root to $\ell$ for which
a clean cut is not available.   
Moreover,  let ${\cal X}^u$
be the set of points that reach $u$ and let $s=|S|$, that is, the  number of  reference centers that reach $u$.
In this case the algorithm splits the bounding box for ${\cal X}^u \cup S$ into
boxes 
of  dimensions $$\frac{L_1}{ \lfloor s^{1/d} \rfloor} \times \cdots \times \frac{L_d}{ \lfloor s^{1/d} \rfloor},$$
where $L_i$ is the difference between the maximum and minimum values of the $i$-th coordinate 
among points in ${\cal X}^u \cup S$. 

The maximum distance between a  point  in $\ell$ and its representative can be upper bounded by  the length of the diagonal
of the axis-aligned box corresponding to $\ell$. 
Let $m \in [d]$ be such that $L_m= \max \{L_1,\ldots,L_d\}$. Then, the length of the diagonal is upper bounded by 
$ L_m \sqrt{d} / \lfloor s^{1/d} \rfloor   \le  2 L_m  \sqrt{d} /s^{1/d}$.

Thus, it suffices to show that $OPT \ge L_m/(2s)$.
Let $\cvec^1,\ldots,\cvec^{s}$ be the  $s$ reference centers that reach node $u$.  
In addition, let
$\xvec^j$ be a point in ${\cal X}^u$  
with reference center $\cvec^j$ and such that 
$|x^j_m -c^j_m |$ is maximum, among the points  in ${\cal X}^u$ with
reference center  $\cvec^j$. Then, we must have
$$\sum_{j=1}^{s} 2 |x^j_m -c^j_m | \ge L_m, $$
for otherwise there would be a clean cut $(m,\theta)$, with 
$\theta \in [a,b]$,
where $a= \min \{y_m | \yvec \in {\cal X}^u \cup S \}$
and $b= \max \{y_m | \yvec \in {\cal X}^u \cup S \}$.
 Hence, for some point $\xvec^j$, 
$|x^j_m -c^j_m | \ge L_m / (2s)$. 
Since $OPT \ge |x^j_m -c^j_m | $ we get that 
$OPT \ge L_m/(2s)$.
\end{proof}

\section{Improved Bounds on $k$-medians for low dimensions}
We  show that the price of explainability for  $k$-medians is $O(d \log k )$,  which improves
the bound from \cite{dasgupta2020explainable} when $d= o(k/\log k)$.



\remove{
restrict our attention to  decision trees that
 take as input a set of $k$ reference centers and, then, use 
 at each of their nodes  axis-aligned cut that separate at least two reference centers.
 As a consequence, each of the $k$ clusters associated with the leaves of the  tree contains
exactly one reference center. 
}

As in the previous section we use  an optimal unrestricted  $k$-clustering 
${\cal C}^*$ for ${\cal X}$ as a guide for building an explainable clustering. Again, by the reference center of 
a point $\xvec \in {\cal X}$ we mean its representative in ${\cal C}^*$.

We  need  some  additional notation. For a
decision tree ${\cal D}$ and a node $u \in  {\cal D}$,
let $diam(u)$ be the $d$-dimensional vector whose
$i$-th coordinate $diam(u)_i$ is given by the difference between the maximum
and the minimum values of coordinate $i$ among the reference centers 
that reach $u$.  
Let $t_u$ be
the number of points that reach $u$ and  are separated from their
reference centers by the cut employed in $u$. Note that a point $\xvec\in {\cal X}$ can only
contribute to $t_u$  if both $\xvec$ and  its  reference
center reach $u$.
Finally,  we use $OPT$ to denote the cost of the optimal unrestricted clustering
${\cal C}^*$.

The following lemma from \cite{dasgupta2020explainable}, expressed in our notation,  
will be useful.

\begin{lemma}\cite{dasgupta2020explainable}
\label{lem:UB19Nov} Let ${\cal C}^*$ be an optimal unrestricted $k$-clustering for ${\cal X}$ and 
let ${\cal D}$ be a decision tree for ${\cal X}$ 
in which each representative of ${\cal C}^*$ lies in a distinct leaf.
Then, the clustering ${\cal C}$ induced by ${\cal D}$ satisfies  
\begin{equation}
\label{eq:23NovUB}
cost({\cal C}) \le OPT + \sum_{u \in {\cal D}} t_u ||diam(u)||_1.
\end{equation}
\end{lemma}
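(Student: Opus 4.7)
The plan is to assign to each leaf $\ell$ of ${\cal D}$ the unique reference center it contains (which exists and is unique by the hypothesis that every representative of ${\cal C}^*$ sits in a distinct leaf) as the representative $\mu(\ell)$, and then to bound $cost({\cal C})$ under this particular choice, since this can only overestimate the optimal choice of representatives. Writing $cost({\cal C}) = \sum_{\xvec \in {\cal X}} ||\xvec - \mu(\ell(\xvec))||_1$, where $\ell(\xvec)$ is the leaf reached by $\xvec$, I would split ${\cal X}$ into points that end up in the same leaf as their reference center $\mu^*(\xvec)$ and those that do not. Points of the first type contribute exactly $||\xvec - \mu^*(\xvec)||_1$, and summing over them yields at most $OPT$.

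For a point $\xvec$ that is separated from $\mu^*(\xvec)$ along its path, let $u(\xvec)$ be the (unique) node where this separation first occurs, i.e., the node whose cut sends $\xvec$ and $\mu^*(\xvec)$ to opposite children. By the triangle inequality,
$$||\xvec - \mu(\ell(\xvec))||_1 \le ||\xvec - \mu^*(\xvec)||_1 + ||\mu^*(\xvec) - \mu(\ell(\xvec))||_1.$$
The first term, summed over all separated points, again gets absorbed into the $OPT$ bound. For the second term, the key observation is that both $\mu^*(\xvec)$ and $\mu(\ell(\xvec))$ are reference centers that reach $u(\xvec)$: the former by the definition of $u(\xvec)$, the latter because it is in the leaf $\ell(\xvec)$, so it necessarily went the same way as $\xvec$ at $u(\xvec)$. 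Since any two reference centers reaching $u(\xvec)$ differ in coordinate $i$ by at most $diam(u(\xvec))_i$, their $\ell_1$ distance is at most $||diam(u(\xvec))||_1$.

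The last step is to aggregate by node: for each $u \in {\cal D}$, the number of points separated from their reference center at $u$ is, by definition, exactly $t_u$, so the total overhead from the second triangle-inequality term is $\sum_{u \in {\cal D}} t_u \, ||diam(u)||_1$. Combined with the $OPT$ bound for the first term, this gives inequality (\ref{eq:23NovUB}). I do not expect any serious obstacle; the only item requiring care is the claim that $\mu(\ell(\xvec))$ actually reaches $u(\xvec)$, but this is immediate from the tree structure, since $\mu(\ell(\xvec))$ and $\xvec$ share the entire path from $u(\xvec)$ down to $\ell(\xvec)$.
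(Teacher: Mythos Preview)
The paper does not include its own proof of this lemma; it is quoted verbatim from \cite{dasgupta2020explainable}. Your argument is correct and is precisely the standard one: assign to each leaf its unique reference center, split the cost of each point via the triangle inequality into its distance to its own reference center (summing to $OPT$) plus, for a separated point, the $\ell_1$ distance between its reference center and the leaf's center, and bound the latter by $||diam(u(\xvec))||_1$ since both centers reach the first-separation node $u(\xvec)$. Your observation that $\mu(\ell(\xvec))$ reaches $u(\xvec)$ because $\ell(\xvec)$ is a descendant of $u(\xvec)$ is exactly the point that makes this work, and the aggregation by node is sound because, under the paper's definition of $t_u$ (which requires both $\xvec$ and its reference center to reach $u$), each separated point is counted at exactly one node.
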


In order to obtain a low-cost explainable clustering we focus on 
 finding a decision tree ${\cal D}$ for which  
 the rightmost  term of the above inequality is small.
 This is the approach taken by {\tt IMM} \cite{dasgupta2020explainable}, 
a greedy strategy that at each node $u$ selects the cut
that yields the minimum possible value for $t_u$.

Although we follow the same approach, our strategy for building the tree  is significantly different.
In order to explain it,  we first  rewrite the rightmost term of (\ref{eq:23NovUB}):
\begin{equation} \sum_{u \in {\cal D}} t_u ||diam(u)||_1 =
 \sum_{i=1}^d \sum_{u \in {\cal D}}  t_u   diam(u)_i.
  \end{equation}

Motivated by Lemma \ref{lem:UB19Nov} and the above identity,
 our strategy  constructs $d$ decision trees ${\cal D}_1,\ldots,{\cal D}_d$,
where  ${\cal D}_i$ is built with the aim of minimizing  
\begin{equation}
\sum_{u \in {\cal D}} t_u diam(u)_i,
\label{eq:ub-21Jan}
\end{equation} 
ignoring the  impact
on the coordinates $j \ne i$.

Next,  it constructs  a decision tree ${\cal D}$ for ${\cal X}$
by picking nodes from these $d$ trees.
More precisely,  to split a node $u$ of ${\cal D}$ 
the strategy first selects a coordinate $i \in [d]$ for which $diam(u)_i$  is maximum.
Next, it 
applies the cut  that is associated with  the node in ${\cal D}_i$ which is the least common ancestor (LCA) of the 
set of reference centers that reach $u$. 

In the pseudo-code presented in Algorithm \ref{alg:randLearner}, $S'$ is a subset of
 the set $S$ of representatives of ${\cal C}^*$. Moreover, 
 ${\cal X}'$ is a subset of the points in ${\cal X}$. 
 The procedure is called, initially, with  ${\cal X}'= {\cal X}$ and $S'=S$.

\remove{
This is the same approach taken by \cite{} where
it is proposed  
a greedy strategy that at each node $u$ selects the cut
that yields the minimum possible value for $t_u$.

Our strategy, however, is significantly different. First it builds $d$ decision trees ${\cal D}_1,\ldots,{\cal D}_d$,
where  ${\cal D}_i$ is built with the aim of minimizing   $UB_i()$, ignoring its impact
on $UB_j()$ for $j \ne i$.
Then,  it constructs  a decision tree ${\cal D}$ for ${\cal X}$
by picking cuts associated with nodes from these $d$ trees.
In fact,  to split a node $u$ of ${\cal D}$ 
the strategy first selects a coordinate $i \in [d]$ according to some criterion  and then
apply the cut  that is given by the node in ${\cal D}_i$ which is the least common ancestor (LCA) of the 
 set of reference centers that reach $v$.

 }

\begin{algorithm}[H]
  \caption{{\tt BuildTree}(${\cal X}' \cup S'$)}
  \begin{algorithmic}[]
  	\small

        \STATE Create a node $u$ and associate it with ${\cal X}' \cup S'$
		
		\IF{$|S'|=1$}
		\STATE Return the leaf $u$
		\ELSE

		\STATE  Select $i \in [d]$  for which $diam(u)_i$ is maximum.
						
		\STATE  $v \leftarrow$  node in ${\cal D}_i$ which is the LCA of the centers in $S'$
		
		\STATE  Split ${\cal X}' \cup S' $ into ${\cal X}'_{L} \cup S'_L $ and ${\cal X}'_{R} \cup S'_R$ using the cut associated with $v$.
		
		\STATE  $u$.{\tt LeftChild} $\rightarrow$ {\tt BuildTree}(${\cal X}'_{L} \cup S'_L$) 
		
		\STATE  $u$.{\tt RightChild}   $\rightarrow$ {\tt BuildTree}(${\cal X}'_{R} \cup S'_R$)
		
		\STATE Return the decision tree rooted at $u$		
		
		\ENDIF

  \end{algorithmic}
  \label{alg:randLearner}
\end{algorithm}


To fully specify the algorithm we need to explain how the decision trees ${\cal D}_i$ are built.
Let $\cvec^1,\ldots,\cvec^k$ be the reference
centers sorted by coordinate $i$, that is,
$c^{j}_i < c^{j_+1}_i$ for $j=1,\ldots,k-1$.
Moreover, let $(i,\theta^j)$ be  the cut that separates the  points in ${\cal X}$ 
with  the  $i$-th coordinate smaller than or equal 
 to $\theta^j=(c^{j}_i+ c^{j+1}_i)/2$ from the remaining ones.

For $1 \le a \le b \le k$, let 
  ${\cal F}_{a,b}$ be the family of binary  decision trees with $(b-a)$ internal nodes and
 $b-a+1$ leaves
 defined as follows:  
\begin{itemize}
\item[(i)] if $a=b$, then ${\cal F}_{a,b}$ has a single tree and this tree contains only one node.
\item[(ii)] if $a<b$, then ${\cal F}_{a,b}$ consists of all the decision trees ${\cal D'}$ with the following structure:
the root of ${\cal D'}$ is identified by  a number $j \in \{a,\ldots,b-1\}$ and associated with the cut
$(i,\theta^j)$; one child of the root of ${\cal D'}$
is a tree in the family ${\cal F}_{a,j}$ while the other is a tree in ${\cal F}_{j+1,b}$.
\end{itemize}

\remove{
For $1 \le a < b \le k$, let $C_{a,b}=\{\cvec^a,\ldots,\cvec^b\}$ and 
 let ${\cal F}_{a,b}$ be the family of binary  decision trees with $(b-a)$ internal nodes  that satisfy the following
conditions:  
\begin{itemize}
\item[(i)] each internal node is identified by a distinct number in the set $ \{a,\ldots,b-1\}$, so that  the one
identified by $j$ is  associated with  cut $(i,\theta^j)$;
\item[(ii)]  for every internal node $v$, the cut associated with $v$  separates at least two 
reference centers 
in $C_{a,b}$ that reach $v$.
\end{itemize}
}

For our analysis, in the next sections, it will be convenient to view  ${\cal F}_{a,b}$ as the family
of binary search trees for the numbers in the set $\{a,\ldots,b-1\}$.

  
Let $T_j$ be the number of points in ${\cal X}$ that are
separated from their centers by  cut 
$(i,\theta^j)$. 
For every tree ${\cal D}' \in {\cal F}_{a,b}$  we define $UB_i({\cal D}')$
as
$$ UB_i({\cal D}') =\sum_{j=a}^{b-1} T_j \cdot diam(j)_i,   $$
where $diam(j)$ is the diameter of the node identified by $j$ in ${\cal D}'$.

The tree  ${\cal D}_i$ is, then, defined as 
$${\cal D}_i = \operatorname{argmin}  \{   UB_i({\cal D}') \mid {\cal D}' \in {\cal F}_{1,k}  \}.$$ 

The motivation for minimizing $UB_i()$ is that  for every tree ${\cal D}' \in {\cal F}_{1,k}$,
  $UB_i()$ is an upper bound on (\ref{eq:ub-21Jan}), that is,
  $$  \sum_{u \in {\cal D}'} t_u diam(u)_i \le \sum_{j=1}^{k-1} T_j \cdot diam(j)_i =UB_i({\cal D}').$$
To see that, let $j$ be the integer identified with the node 
$u \in {\cal D}'$.   By definition  $diam(u)_i=diam(j)_i$.
Moreover,  we have $t_u \le T_j $ because $t_u$ only accounts the points that are separated
from their reference centers among those that reach $u$, while
  $T_j$ accounts all the points in ${\cal X}$ regardless of whether they reach $u$ or not.

We discuss how to construct ${\cal D}_i$  efficiently. Let
 $OPT_{a,b} =  \min \{ UB_i({\cal D}') \mid {\cal D}' \in {\cal F}_{a,b}  \},
$ if $a < b$, and let $OPT_{a,b}=0$ if $a=b$.
Hence, $UB_i({\cal D}_i)=OPT_{1,k}$.
The following relation holds for all $ a < b$:

\begin{equation}
OPT_{a,b} = \min_{a \le j \le b-1} \left \{  T_j (c^b_i -c^a_i )  + OPT_{a,j}+OPT_{j+1,b} \right \}.
\label{eq:dynprog2}
\end{equation}

Thus, given a set of $k$ reference centers and the values $T_j$'s,
${\cal D}_i$ can be computed in $O(k^3)$ time  by  solving equation (\ref{eq:dynprog2}),
for $a=1$ and $b=k$,
via standard dynamic programming techniques.


\subsection{Approximation Analysis: Overview}

We prove that the cost of the clustering induced by ${\cal D}$ is  $O(d \log k)\cdot OPT$.
To reach this goal, we 
first  show that

\begin{equation}
\label{eq:23NovUB2}
UB_i({\cal D}_i) \le 2 \log k \left ( \sum_{j=1}^{k-1} (c_i^{j+1}-c^j_i)T_j \right).
\end{equation}

The proof of this bound relies
on  the fact that ${\cal D}_i$ can be seen
as a binary search tree with non-uniform probing costs. We use   properties of this kind of tree, in particular  the one proved in 
\cite{DBLP:journals/jcss/CharikarFGKRS02} about its competitive ratio.


Let $$OPT_i=\sum_{\xvec \in {\cal X}} |x_i- c(\xvec)_i|$$ be the contribution of coordinate $i$
to $OPT$, where $c(\xvec)$ is the reference center of $\xvec$. 
Our second step consists of showing that
\begin{equation}
\label{eq:nov19-3}
 \left ( \sum_{j=1}^{k-1} (c_i^{j+1}-c^j_i)T_j \right)/2
  \le  OPT_i.
 \end{equation}
Roughly speaking, the proof of this bound consists of projecting the points of ${\cal X}$ and
the reference centers onto the axis  $i$
and then counting the number of times the interval $[c_i^{j},c_i^{j+1}]$ appears
in the segments that connect points in ${\cal X}$  to their reference centers.
This is exactly the same line of reasoning employed  to prove  Lemma 6 from  the supplementary version of  \cite{dasgupta2020explainable}. 

At this point, from the two previous inequalities,    we obtain

\begin{equation}
\label{eq:dez5}
UB_i({\cal D}_i) \le 4 \log k \cdot OPT_i.
\end{equation}
Finally, we prove that  a factor of $d$ is incurred when we build the tree ${\cal D}$ from
the nodes of the trees ${\cal D}_1,\ldots,{\cal D}_d$:

\begin{equation}
\label{eq:nov19}
\sum_{v \in {\cal D}} t_v ||diam(v)||_1 \le d \sum_{i=1}^d UB_i ({\cal D}_i).
\end{equation}

From (\ref{eq:dez5}), (\ref{eq:nov19}) and  the identity  $OPT=\sum_{i=1}^d OPT_i$,
we obtain 
$$\sum_{v \in {\cal D}} t_v ||diam(v)||_1 \le  4 d \log k \cdot OPT.$$
This together with 
  Lemma \ref{lem:UB19Nov} allows us to  establish the main theorem of this section.

\begin{thm}
The price of explainability  for $k$-medians is $ O( d \log k )$.
\end{thm}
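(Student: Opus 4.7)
The plan is to bound $cost(\mathcal{C})$, where $\mathcal{C}$ is the clustering induced by the tree $\mathcal{D}$ built by {\tt BuildTree}, by $O(d \log k) \cdot OPT$. I would begin by invoking Lemma \ref{lem:UB19Nov} to obtain
\[ cost(\mathcal{C}) \le OPT + \sum_{u \in \mathcal{D}} t_u ||diam(u)||_1, \]
and then bound the excess term by $4 d \log k \cdot OPT$ through the chain of inequalities (\ref{eq:23NovUB2})--(\ref{eq:nov19}). Concretely, (\ref{eq:nov19}) converts the tree-wide sum to $d \sum_i UB_i(\mathcal{D}_i)$; each summand is bounded by $4 \log k \cdot OPT_i$ by combining (\ref{eq:23NovUB2}) and (\ref{eq:nov19-3}); and the identity $\sum_i OPT_i = OPT$ finishes the estimate.

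To establish (\ref{eq:23NovUB2}) I would exploit the identification of $\mathcal{D}_i$ with a weighted binary search tree: the keys are the split thresholds $\theta^1, \ldots, \theta^{k-1}$, the probing cost at index $j$ is the gap $(c_i^{j+1}-c_i^j)$, and the access frequency at $j$ is $T_j$. Under this correspondence $UB_i(\mathcal{D}_i)$ equals the total weighted search cost (since each root-to-node path in $\mathcal{D}_i$ accumulates exactly the gaps defining $diam(\cdot)_i$), and since the dynamic program (\ref{eq:dynprog2}) outputs an $UB_i$-minimizer, the $O(\log k)$-competitive guarantee for binary searching with non-uniform probe costs from \cite{DBLP:journals/jcss/CharikarFGKRS02} compares $\mathcal{D}_i$ against the linear-in-gaps baseline $\sum_j (c_i^{j+1}-c_i^j)T_j$, yielding the required $2 \log k$ factor.

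For (\ref{eq:nov19-3}) I would project the points of $\mathcal{X}$ and the $k$ reference centers onto the $i$-th axis and carry out a counting argument: $(c_i^{j+1}-c_i^j) T_j$ measures the total length of the inter-center interval $[c_i^j, c_i^{j+1}]$ traversed by segments connecting projected points to their projected reference centers, so summing over $j$ counts, weighted by gap length, the portion of each such segment lying in the convex hull of the projected centers. Charging each crossing to the segment producing it and observing that a segment of projected length $L$ contributes at most $2L$ in total gives the factor $2$; this reuses the line of reasoning of Lemma~6 in the supplementary version of \cite{dasgupta2020explainable}.

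Finally, (\ref{eq:nov19}) exploits the greedy choice inside {\tt BuildTree}: at node $u$ the cut coordinate $i^*(u)$ maximizes $diam(u)_{i^*(u)}$, so $||diam(u)||_1 \le d \cdot diam(u)_{i^*(u)}$. I would then charge each such $u$ to its LCA $v \in \mathcal{D}_{i^*(u)}$ of the reference centers reaching $u$, using $S^u \subseteq S^v$ to get $diam(u)_{i^*(u)} \le diam(v)_{i^*(u)}$ and $t_u \le T_{j(v)}$ from the definitions of $t_u$ and $T_j$. The step I expect to be the most delicate is organizing this charging so that the per-coordinate contributions aggregate cleanly into $UB_i(\mathcal{D}_i)$: one must verify that each $v \in \mathcal{D}_i$ is the LCA for at most one $u$ within the fiber $\{u : i^*(u)=i\}$, or otherwise absorb the overlaps into constants. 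Assembling everything then yields $cost(\mathcal{C}) \le OPT + 4 d \log k \cdot OPT = O(d \log k) \cdot OPT$, as claimed.
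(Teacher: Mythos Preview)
Your plan matches the paper's proof structure exactly: Lemma~\ref{lem:UB19Nov}, then inequalities (\ref{eq:23NovUB2}), (\ref{eq:nov19-3}), (\ref{eq:nov19}), combined via $\sum_i OPT_i = OPT$. Two points deserve correction or sharpening.

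First, in your application of \cite{DBLP:journals/jcss/CharikarFGKRS02} you have the roles of ``probe cost'' and ``weight'' reversed. The quantity $diam(j)_i$ is the sum of gaps $c_i^{\ell+1}-c_i^\ell$ over the indices $\ell$ in the \emph{subtree} rooted at $j$ (these are the centers still reaching node $j$), not over the root-to-$j$ path. Consequently $UB_i(\mathcal{D}') = \sum_\ell (c_i^{\ell+1}-c_i^\ell)\sum_{j\in An(\ell,\mathcal{D}')} T_j$, and the Charikar et~al.\ tree must be built with the $T_j$'s as probe costs (so that $\sum_{j\in An(\ell)} T_j \le 2\log k\cdot T_\ell$), not with the gaps. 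With your assignment the identity ``$UB_i$ equals total weighted search cost'' is false, and the argument would not go through as written.

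Second, for (\ref{eq:nov19}) the paper does not need the uniqueness you worry about. Several nodes $u\in\mathcal{D}$ may share the same LCA node $j$ in $\mathcal{D}_i$; the key observation is that any two such $u$'s are incomparable in $\mathcal{D}$ (if one were an ancestor of the other, the descendant's LCA in $\mathcal{D}_i$ would lie strictly below $j$), so their point sets $\mathcal{X}_u$ are pairwise disjoint, giving $\sum_{u\in S_{i,j}} t_u \le T_j$. This replaces your ``absorb overlaps into constants'' with an exact bound.
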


\remove{We need some additional notation.
For a set of points $Y$, with some of them being reference centers,
we use $t^Y_u$ to denote the number of 
points in $Y$ that are separated from their reference centers by the cut associated with $u$.
If   $Y$ is the set of points that reach  the node $u$ then
 we keep using $t_u$ rather than $t^Y_u$ for the sake of  simplicity.
}

\ifICML
\else

\subsection{Approximation Analysis: Proofs}
We start with the proof of inequality (\ref{eq:23NovUB2}).

\begin{lemma}
The tree ${\cal D}_i$ satisfies 
$$UB_i({\cal D}_i) \le 2 \log k \left ( \sum_{j=1}^{k-1} (c_i^{j+1}-c^j_i)T_j \right).$$
\label{lem:UB-k-medians-1}
\end{lemma}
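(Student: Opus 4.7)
The strategy is to reformulate $UB_i$ as the expected search cost in a binary search tree with non-uniform probe costs, and then invoke the known $O(\log k)$-approximation for that problem from \cite{DBLP:journals/jcss/CharikarFGKRS02}. Since $\mathcal{D}_i$ is defined as the exact minimizer of $UB_i$ over $\mathcal{F}_{1,k}$, it suffices to exhibit \emph{any} tree in $\mathcal{F}_{1,k}$ meeting the bound.

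First I would perform the combinatorial rewriting. Set $L_m = c_i^{m+1} - c_i^m$, and recall that, in the BST view of $\mathcal{F}_{1,k}$, the reference centers $c^a, \ldots, c^b$ reaching a node identified by $j$ are exactly those at the in-order leaves of the subtree rooted at $j$. Therefore
$$diam(j)_i \;=\; c_i^{b} - c_i^{a} \;=\; \sum_{m \in \mathrm{sub}(j)} L_m,$$
where $\mathrm{sub}(j)$ denotes the keys in the subtree rooted at $j$. Swapping the order of summation yields
$$UB_i(\mathcal{D}') \;=\; \sum_{j} T_j\, diam(j)_i \;=\; \sum_{m=1}^{k-1} L_m \sum_{j \in \mathrm{anc}(m)} T_j,$$
where $\mathrm{anc}(m)$ is the set of ancestors of $m$ in $\mathcal{D}'$ (including $m$ itself).

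Second, I would interpret this as the cost of a binary search tree on the keys $\{1,\ldots,k-1\}$ with access weights $L_m$ and probing cost $T_j$ per comparison---exactly the framework studied by \cite{DBLP:journals/jcss/CharikarFGKRS02}. Note that $\sum_{m} L_m T_m$ is a trivial lower bound for this cost, since every $m$ is its own ancestor. The main result of \cite{DBLP:journals/jcss/CharikarFGKRS02} produces a BST whose cost is within an $O(\log k)$ factor of this lower bound; plugging that BST into $\mathcal{F}_{1,k}$ and using optimality of $\mathcal{D}_i$ gives
$$UB_i(\mathcal{D}_i) \;\le\; 2\log k \cdot \sum_{j=1}^{k-1} L_j T_j,$$
which is the desired inequality.

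The combinatorial rewriting in the first step is a routine double-count; the real weight of the proof is the invocation of \cite{DBLP:journals/jcss/CharikarFGKRS02}. The subtlety I expect to be the main obstacle is pinning down the explicit constant $2 \log k$: their theorem is often stated in big-$O$ form, so I would either appeal to the sharp version of their analysis or, failing that, substitute a direct construction (for instance, a weighted median-split BST on the keys, whose logarithmic depth is paid level-by-level against $\sum_j L_j T_j$) to obtain the constant on the spot.
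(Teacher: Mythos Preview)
Your proposal is correct and follows essentially the same route as the paper: the same double-counting rewrite $UB_i(\mathcal{D}')=\sum_{\ell} (c_i^{\ell+1}-c_i^{\ell})\sum_{j\in \mathrm{anc}(\ell)} T_j$, the same appeal to \cite{DBLP:journals/jcss/CharikarFGKRS02}, and the same use of the optimality of $\mathcal{D}_i$.

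One small sharpening worth noting: where you frame the invocation as an $O(\log k)$ approximation for a BST with access weights $L_m$ and probe costs $T_j$, the paper instead calls the \emph{competitive-ratio} statement (their Theorem~4.5), which gives a single tree, depending only on $(T_1,\dots,T_{k-1})$, such that $\sum_{j\in \mathrm{anc}(\ell)} T_j \le (\log k + o(\log k))\,T_\ell$ for \emph{every} $\ell$. This per-node bound is independent of the $L_m$'s, and multiplying by $L_\ell$ and summing immediately yields the desired inequality with the explicit constant $2\log k$---so your worry about extracting the constant disappears once you invoke this sharper form.
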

\begin{proof}

Let ${\cal D}'$ be a tree in ${\cal F}_{1,k}$. By construction,   the set of centers that reach the node in $ {\cal D}'$
identified by $j$ is a contiguous
subsequence of $\cvec^1,\ldots,\cvec^k$. Let $r(j)$ and $s(j)$ be, respectively,
the first and the last indexes of the  centers of this subsequence. Thus, 

\begin{equation}
\label{eq:23Nov-expr}
  UB_i({\cal D}') = \sum_{j=1}^{k-1} T_{j} \cdot diam(j)_ i = \sum_{j=1}^{k-1} T_{j} \sum_{\ell=r(j)}^{s(j)-1} (c^{\ell+1}_i-c^{\ell}_i ).
 \end{equation}

We can show that the right-hand side of the above equation satisfies


\begin{equation}
\sum_{j=1}^{k-1} T_{j} \sum_{\ell=r(j)}^{s(j)-1} (c^{\ell+1}_i-c^{\ell}_i ) =
 \sum_{\ell=1}^{k-1}  (c^{\ell+1}_i -c^{\ell}_i) \cdot \sum_{j  \in An(\ell,{\cal D}')} T_j,
\label{eq:22Dec-expr}
\end{equation} 
where  $An(\ell,{\cal D}')$ is the set of  nodes that are ancestors (including $\ell$)
of the node identified by $\ell$ in ${\cal D}'$.

To see that, fix $j,\ell \in [k-1]$. The term 
$T_j  (c^{\ell+1}_i -c_i^\ell )$
contributes the left-hand side of (\ref{eq:22Dec-expr})   
if the centers  $\cvec^{\ell}$ and $\cvec^{\ell+1}$ reach the node $j$ in ${\cal D}'$.
This happens if and only if $j$ is an ancestor of the node identified by $\ell$ in
${\cal D}'$.

\remove{

\begin{equation}
\sum_{j=1}^{k-1} T_{j} \sum_{\ell=r(j)}^{s(j)-1} (c^{\ell+1}_i-c^{\ell}_i ) =
 \sum_{j=1}^{k-1}  (c^{j+1}_i -c^{j}_i) \cdot \sum_{u  \in An(j,{\cal D}')} T_{u},
\label{eq:22Dec-expr}
\end{equation} 
where 
$An(j,{\cal D}')$ is the set of  nodes that are ancestors (including $j$)
of the node identified by $j$ in ${\cal D}'$.

In fact, 
for every $j,\ell \in [k-1]$, the term $T_j (c^{\ell+1}_i -c_i^\ell )$ contributes to
the left-hand side of (\ref{eq:22Dec-expr})   
if and only if  both  centers $\cvec^{\ell}$ and $\cvec^{\ell+1}$ reach the node $j$.
This happens exactly for  the nodes $u$ that are ancestors of
the node identified by $j$ in ${\cal D}'$. 
}

Now, we use  Theorem 4.5 from \cite{DBLP:journals/jcss/CharikarFGKRS02}.
It states  that for any  vector  $(p_1,\ldots,p_k)$  of $k$ non-negative real numbers  there exists  a binary search  tree $B$ having $k$ nodes, with each of them associated with a number in $[k]$, that 
satisfies 
$$\sum_{j \in An(\ell,B) } p_j \le  (\log k +o(\log k)) p_\ell \le  2 \log k  \cdot p_\ell,$$
for every node $\ell$ of $B$.

Let ${\cal D}_c$ be a  tree obtained via the result of \cite{DBLP:journals/jcss/CharikarFGKRS02} 
for the vector $(T_1,\ldots,T_{k-1})$.
It satisfies
$$ \sum_{j \in An(\ell,{\cal D}_c) } T_j \le 2 \log (k-1) \cdot  T_\ell.$$
By using this inequality, (\ref{eq:23Nov-expr}) and (\ref{eq:22Dec-expr}), we get that
$$ UB_i({\cal D}_c) = \sum_{\ell=1}^{k-1}  (c^{\ell+1}_i -c^{\ell}_i)   \sum_{j \in An(\ell,{\cal D}_c) } T_j \le 
2 \log k \sum_{\ell=1}^{k-1}  (c^{\ell+1}_i -c^{\ell}_i) T_\ell.$$
The result follows because the minimality of 
${\cal D}_i $ guarantees that  
$ UB_i({\cal D}_i)  \le UB_i({\cal D}_c) $.
\end{proof}

Inequality (\ref{eq:nov19-3}) is formalized in the next lemma.
\begin{lemma} Let $OPT_i$ be the contribution of the coordinate $i$ for the cost of
an optimal unrestricted clustering ${\cal C}^*$. Then,
\begin{equation}  OPT_i= \sum_{ \xvec \in {\cal X} } |x_i - c(\xvec)_i| \geq \sum_{j=1}^{k-1} 
\frac{(c^{j+1}_i-c^j_i) T_j}{2},
\end{equation}
where $c(\xvec)$ is the  reference center of $\xvec$.
\label{lem:k-means-lower-bound}
\end{lemma}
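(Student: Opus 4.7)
I would prove the lemma by a double-counting argument along the projection onto the $i$-th coordinate axis. For each point $\xvec \in {\cal X}$ let $I(\xvec)$ denote the closed interval on the $i$-axis with endpoints $x_i$ and $c(\xvec)_i$. Its length is exactly $|x_i - c(\xvec)_i|$, so $OPT_i = \sum_{\xvec \in {\cal X}} \mathrm{len}(I(\xvec))$. My aim is to charge, for every cut $(i,\theta^j)$, each point that is separated by it against a portion of length $(c^{j+1}_i - c^j_i)/2$ lying inside its own segment $I(\xvec)$.

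The central reduction is the following per-point inequality: for every $\xvec \in {\cal X}$,
$$\sum_{j \,:\, \xvec \text{ separated from } c(\xvec) \text{ by } (i,\theta^j)} \frac{c^{j+1}_i - c^j_i}{2} \;\le\; |x_i - c(\xvec)_i|.$$
Given this, summing over $\xvec$ and interchanging the order of summation turns the left-hand side into $\sum_{j=1}^{k-1} T_j\,(c^{j+1}_i - c^j_i)/2$, which is exactly the bound we want.

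To establish the per-point inequality, assume without loss of generality that $c(\xvec)_i = c^m_i$ and $x_i \le c^m_i$ (the other case is symmetric). Because the $\theta^j$'s are strictly increasing and $\theta^{m-1} < c^m_i$, the cuts that separate $\xvec$ from $c^m$ are exactly those with $\theta^j \in [x_i, c^m_i)$, which form a contiguous block of indices $j \in \{j_0, j_0+1, \ldots, m-1\}$, where $j_0$ is the smallest index with $\theta^{j_0} \ge x_i$ (the block being empty if no such $j_0 \le m-1$ exists, in which case there is nothing to prove). The telescoping sum gives
$$\sum_{j=j_0}^{m-1} \frac{c^{j+1}_i - c^j_i}{2} \;=\; \frac{c^m_i - c^{j_0}_i}{2},$$
so it suffices to show $(c^m_i - c^{j_0}_i)/2 \le c^m_i - x_i$, equivalently $x_i \le (c^{j_0}_i + c^m_i)/2$. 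But from $\theta^{j_0} = (c^{j_0}_i + c^{j_0+1}_i)/2 \ge x_i$ and $c^{j_0+1}_i \le c^m_i$ we get $x_i \le (c^{j_0}_i + c^{j_0+1}_i)/2 \le (c^{j_0}_i + c^m_i)/2$, finishing the argument.

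The main piece of care here is identifying the block of separating cuts correctly and handling the edge cases (empty block, $x_i$ coinciding with some $c^j_i$, or $x_i > c(\xvec)_i$); no real difficulty is expected beyond this bookkeeping. The factor $1/2$ on the right-hand side is tight, and it is precisely the midpoint choice $\theta^j = (c^j_i + c^{j+1}_i)/2$ that forces it: each separating cut can be charged to only the half of $[c^j_i, c^{j+1}_i]$ that lies on the same side as $\xvec$ as $c(\xvec)$.
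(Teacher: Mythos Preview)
Your proposal is correct and follows essentially the same approach as the paper: both establish the per-point inequality $\sum_{j \in Cut(\xvec)} (c^{j+1}_i - c^j_i)/2 \le |x_i - c(\xvec)_i|$ and then sum over $\xvec$ and swap the order of summation. The only cosmetic difference is in how the per-point bound is verified---you telescope over the contiguous block of separating indices and use $x_i \le \theta^{j_0}$, while the paper observes directly that for each $j \in Cut(\xvec)$ one of the two half-intervals $[c^j_i,\theta^j]$, $[\theta^j,c^{j+1}_i]$ lies inside the segment joining $x_i$ to $c(\xvec)_i$ (and these half-intervals are pairwise disjoint); both arguments are equally short and yield the same constant.
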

\begin{proof}
Let $\cvec^1,\ldots,\cvec^k$ be the reference centers sorted by increasing order of coordinate $i$.
Recall that $\theta^j=(c^j_i+c^{j+1}_i)/2$. For every $\xvec \in {\cal X}$, let
  $Cut(\xvec)=\{ j| (i,\theta^j) \mbox{ separates }  \xvec \mbox{ from } c(\xvec) \}$.


\remove{

\medskip

 {\it Case 1}: $x_i < c(x)_i$.  Let $r$ be the smallest integer (it it exists) for which the  midpoint
of interval $I_r$ has coordinate larger than $x_i$. 
Moreover,  let $s$ be the rank of $c(\xvec)$ in the list of centers sorted by coordinate $i$, that is,
$c(\xvec)=\cvec^s$.   If $s > r$ we have that
 
$$|x_i- c(x)_i|= c(x)_i - x_i   \ge \sum_{\ell=r}^s ( c^{\ell+1}_i-c^{\ell}_i)/2. $$
}


Fix $\xvec \in {\cal X}$. 
If $j \in Cut(\xvec)$ then either  $[c^{j}_i,\theta^j]$ or 
$[\theta^j,c^{j+1}_i]$ is included in the real interval
with endpoints $x_i$ and $c(x)_i$. Thus, 
we have that
$$|x_i- c(x)_i| \ge 
\sum_{j \in Cut(\xvec) } ( c^{j+1}_i-c^{j}_i)/2  $$

\remove{
\medskip

{\it Case 2}: $x_i \ge c(x)_i$. We also have that
$$|x_i- c(x)_i| =  x_i - c(x)_i  \ge 
\sum_{j \in Cut(\xvec) } ( c^{j+1}_i-c^{j}_i)/2  $$
}

\remove{
{\it Case 2}: $x_i \ge c(x)_i$.
In this case, let
$c(\xvec)=\cvec^r$ and let $s$
be the largest integer (if it exists) for which the midpoint of $I_s$ has coordinate smaller than $x_i$.
If $s \ge r$, we have that
 
$$|x_i- c(x)_i| =  x_i - c(x)_i   \ge \sum_{\ell=r}^s ( c^{\ell+1}_i-c^{\ell}_i)/2. $$
}

By  adding the above inequality for all $\xvec \in {\cal X}$ we conclude
that the number of times  that 
$(c^{j+1}_i-c^j_i)/2$ contributes to the right-hand side,  for every $j \in [k-1]$, is
exactly the number of times that  $(i,\theta^j)$
separates a point $\xvec' \in {\cal X}$ from its reference center $c(\xvec')$. This number is exactly $T_j$.
\end{proof}

Finally, we present
the proof of inequality (\ref{eq:nov19}).

\begin{lemma}
Let ${\cal D}$ be the decision tree built by Algorithm \ref{alg:randLearner}. Then,
$$\sum_{v \in {\cal D}} t_v ||diam(v)||_1 \le d \sum_{i=1}^d  UB_i({\cal D}_i).$$
\label{lem:tree-combination}
\end{lemma}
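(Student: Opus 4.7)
The plan is to combine the greedy coordinate choice made by Algorithm \ref{alg:randLearner} with a charging argument that maps each term $t_v \cdot diam(v)_{i(v)}$ to terms appearing in $UB_{i(v)}({\cal D}_{i(v)})$. For an internal node $v \in {\cal D}$, let $i(v)$ denote the coordinate selected to split $v$ and let $\ell(v)$ denote the LCA node in ${\cal D}_{i(v)}$ whose cut is applied at $v$; let also $S^v$ denote the set of reference centers reaching $v$. Since $i(v)$ is chosen to maximize $diam(v)_i$, I first observe that $\|diam(v)\|_1 = \sum_{i=1}^{d} diam(v)_i \le d \cdot diam(v)_{i(v)}$, so it suffices to prove the sharper bound $\sum_{v \in {\cal D}} t_v \cdot diam(v)_{i(v)} \le \sum_{i=1}^{d} UB_i({\cal D}_i)$.

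I would then group the internal nodes of ${\cal D}$ first by the coordinate $i(v)=i$ and then by the LCA node $\ell(v)=\ell$ in ${\cal D}_i$. The crux is the claim that whenever two distinct nodes $v_1,v_2\in{\cal D}$ satisfy $i(v_1)=i(v_2)=i$ and $\ell(v_1)=\ell(v_2)=\ell$, they cannot be in ancestor-descendant relationship. Indeed, both apply the same cut $(i,\theta^j)$, where $j$ identifies $\ell$ in ${\cal D}_i$; if $v_2$ were a proper descendant of $v_1$, then $S^{v_2}$ would be trapped on one side of $\theta^j$, but the assumption $\ell(v_2)=\ell$ forces $S^{v_2}$ to contain centers on \emph{both} sides of $\theta^j$ (since $\ell$ is the LCA of $S^{v_2}$ in ${\cal D}_i$ and the cut at $\ell$ is $(i,\theta^j)$), a contradiction. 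Consequently, the nodes sharing the same pair $(i,\ell)$ lie in pairwise disjoint subtrees of ${\cal D}$, so the subsets of points reaching them are pairwise disjoint.

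Given this disjointness, for each fixed $(i,\ell)$ one obtains $\sum_{v:\,i(v)=i,\,\ell(v)=\ell} t_v \le T_j$, because each $t_v$ counts a disjoint subset of points separated from their reference center by the cut $(i,\theta^j)$, and all such points contribute to $T_j$. Moreover, since $S^v$ is a subset of the centers reaching $\ell$ in ${\cal D}_i$, monotonicity of diameter under subset inclusion yields $diam(v)_i \le diam(\ell)_i = diam(j)_i$. Summing, for each fixed $i$,
\[
\sum_{v:\,i(v)=i} t_v \cdot diam(v)_i \;\le\; \sum_{\ell} diam(j(\ell))_i \cdot T_{j(\ell)} \;\le\; \sum_{j=1}^{k-1} T_j \cdot diam(j)_i \;=\; UB_i({\cal D}_i),
\]
and a final sum over $i$, combined with the first observation, gives $\sum_{v \in {\cal D}} t_v \cdot \|diam(v)\|_1 \le d \sum_{i=1}^{d} UB_i({\cal D}_i)$, as required.

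The main obstacle is the injectivity/disjointness claim above, which is where the algorithmic rule (that the cut at $v$ is precisely the LCA cut in ${\cal D}_{i(v)}$) really gets used, together with the definitional property of LCA. Once that is in place, everything else is a routine rearrangement using nonnegativity of the summands and the observation that summing over the subset of nodes of ${\cal D}_i$ that appear as some $\ell(v)$ is bounded by the sum over all internal nodes of ${\cal D}_i$, which is exactly $UB_i({\cal D}_i)$.
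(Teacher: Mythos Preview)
Your proposal is correct and follows essentially the same approach as the paper: both group the internal nodes of ${\cal D}$ by the pair (coordinate $i$, LCA node $j$ in ${\cal D}_i$), use the greedy choice of $i$ to bound $\|diam(v)\|_1\le d\cdot diam(v)_i$, use $S^v\subseteq\{\text{centers reaching }j\}$ to bound $diam(v)_i\le diam(j)_i$, and prove disjointness of the point-sets reaching nodes in the same group to get $\sum t_v\le T_j$. Your disjointness argument (ruling out the ancestor--descendant case via the LCA property, then noting that incomparable nodes automatically see disjoint point-sets) is just a reorganization of the paper's two-case Claim.
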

\begin{proof}
For a node $j \in {\cal D}_i$,
let $S_{i,j}$ be the (possibly empty) set of nodes in the tree
${\cal D}$ that correspond to  $j$, that is, the nodes that use the cut associated with the node $j$ from 
${\cal D}_i$. 
We have
\begin{equation}
\sum_{v \in {\cal D}} t_v ||diam(v)||_1 =\sum_{i=1}^d  \sum_{j \in {\cal D}_i}
\sum_{u \in S_{i,j}}  t_u || diam(u)||_1.
\label{eq:24nov-UB}
\end{equation}

Moreover, we have that
\begin{eqnarray}
 \sum_{u \in S_{i,j}} t_u  ||diam(u)||_1 \le \sum_{u \in S_{i,j}}  t_u \cdot d  \cdot diam(u)_i  \le \label{eq:bound1}\\
  \sum_{u \in S_{i,j}} d \cdot t_u \cdot \max_{u \in S_{i,j}}\{ diam(u)_i\} \le \sum_{u \in S_{i,j}} d \cdot t_u \cdot diam(j)_i,  \label{eq:bound2}\end{eqnarray}
where the first  inequality  in (\ref{eq:bound1}) holds because $i$ is the coordinate
for which the diameter of $u$ is maximum and the 
inequality (\ref{eq:bound2}) holds
 because
the set of centers in $u$ is a subset of the set of centers that reach the node identified by $j$ in
${\cal D}_i$.

\begin{claim}
For a node $u \in S_{i,j}$, let ${\cal X}_u \subseteq {\cal X}$ be the set of points that reach $u$
in ${\cal D}$.
Then,  ${\cal X}_u \cap {\cal X}_{u'} = \emptyset $ for every $u,u'\in S_{i,j}$,
with $u \ne u'$. 
\end{claim}
\begin{proof}
Let $w$ be the least common ancestor of $u'$ and $u$ in ${\cal D}$.
If $w \notin \{ u,u'\}$  then the cut associated with $w$ splits
${\cal X}_w$ into two disjoint regions, one of them containing 
${\cal X}_u$ and the other containing 
${\cal X}_{u'}$ so that ${\cal X}_u$ and $ {\cal X}_{u'}$ are disjoint.

If $w \in \{ u,u'\}$ let us assume w.l.o.g. that $w=u$. In this case,
the cut $(i,\theta^j)$, associated with $u$, splits ${\cal X}_u$ into two regions, one
of them containing all the reference centers that reach $u'$. These  centers
are contained in the set of reference centers of one of the children of $j$ in 
${\cal D}_i$ and, hence, the LCA in ${\cal D}_i$ of the set of centers that reach
$u'$ is not $j$, that is, $u' \notin S_{i,j}$. This contradiction  shows that this case cannot occur.
\end{proof}

From  the previous claim we get that 
$$ 
 \sum_{u \in S_{i,j}} t_u  \le T_j.
 $$
It follows from (\ref{eq:bound1})-(\ref{eq:bound2}) and the above inequality that
$$ \sum_{u \in S_{i,j}} t_u  ||diam(u)||_1 \le  d \cdot T_j \cdot diam(j)_i. $$ 

Hence, it follows from (\ref{eq:24nov-UB}) that 
\begin{gather*}
    \sum_{v \in {\cal D}} t_v ||diam(v)||_1 \le \sum_{i=1}^d  \sum_{j \in {\cal D}_i}
d \cdot T_j \cdot diam(j)_i= \\
d \sum_{i=1}^d  UB_i({\cal D}_i ). \qedhere
\end{gather*}
\end{proof}



 
\fi

\section{The $k$-means problem}

\subsection{Improved bounds for low dimensions} 

The result we obtained  for  the $k$-medians problem
can be extended to the $k$-means problem:

\begin{thm}
The price of explainability for $k$-means is $O(d k \log k)$.
\label{thm:k-means}
\end{thm}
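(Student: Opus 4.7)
The plan is to mirror the $k$-medians argument of the previous section, carefully tracking the price of squaring the distances. First I would establish the $k$-means analog of Lemma \ref{lem:UB19Nov}: for any decision tree ${\cal D}$ whose leaves place the representatives of an optimal unrestricted $k$-clustering ${\cal C}^*$ in distinct leaves, the induced clustering ${\cal C}$ satisfies
\[
cost({\cal C}) \le 2\,OPT + 2 \sum_{u \in {\cal D}} t_u\,\|diam(u)\|_2^{\,2}.
\]
This follows from applying $\|a+b\|_2^2 \le 2\|a\|_2^2 + 2\|b\|_2^2$ to $\xvec - \mu(C) = (\xvec - c(\xvec)) + (c(\xvec) - \mu(C))$ for each misclassified point $\xvec$, and using that both $c(\xvec)$ and $\mu(C)$ reach the unique ancestor $u$ where $\xvec$ is separated from $c(\xvec)$, so $\|c(\xvec) - \mu(C)\|_2 \le \|diam(u)\|_2$.

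Next I would construct $d$ coordinate-wise trees ${\cal D}_1,\ldots,{\cal D}_d$ as before, but optimizing the \emph{quadratic} objective
\[
UB_i^{(2)}({\cal D}') = \sum_{j \in {\cal D}'} T_j \cdot diam(j)_i^{\,2}
\]
over the family ${\cal F}_{1,k}$; this is still computable in $O(k^3)$ time via the obvious modification of the dynamic program in (\ref{eq:dynprog2}). Feeding these trees into Algorithm \ref{alg:randLearner} unchanged produces the final tree ${\cal D}$. Redoing Lemma \ref{lem:tree-combination} with $\|diam(v)\|_2^{\,2} \le d \cdot diam(v)_{i_v}^{\,2}$ in place of $\|diam(v)\|_1 \le d \cdot diam(v)_{i_v}$ (and using the same disjointness claim on the $S_{i,j}$) yields
\[
\sum_{v \in {\cal D}} t_v\,\|diam(v)\|_2^{\,2} \le d \sum_{i=1}^d UB_i^{(2)}({\cal D}_i).
\]

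The remaining task is to bound each $UB_i^{(2)}({\cal D}_i)$ against $OPT_i^{k\text{-means}} := \sum_{\xvec}(x_i - c(\xvec)_i)^2$. Writing $\delta_\ell = c^{\ell+1}_i - c^{\ell}_i$, Cauchy--Schwarz applied to the telescoping sum $diam(j)_i = \sum_{\ell=r(j)}^{s(j)-1} \delta_\ell$ gives $diam(j)_i^{\,2} \le k \sum_{\ell} \delta_\ell^{\,2}$, so, applying the Charikar et al.\ binary-search-tree construction (as in Lemma \ref{lem:UB-k-medians-1}) with weights $\delta_\ell^{\,2}$ in place of $\delta_\ell$, one obtains
\[
UB_i^{(2)}({\cal D}_i) \le 2k \log k \sum_\ell \delta_\ell^{\,2}\, T_\ell.
\]
Finally, I would adapt the projection argument of Lemma \ref{lem:k-means-lower-bound} to squares: for a point $\xvec$ with $\Delta := |x_i - c(\xvec)_i|$, every cut $j \in Cut(\xvec)$ satisfies both $\delta_j \le 2\Delta$ and $\sum_{j \in Cut(\xvec)} \delta_j \le 2\Delta$ (the cuts tile a subinterval of length at most $2\Delta$), hence $\sum_{j \in Cut(\xvec)} \delta_j^{\,2} \le 4\Delta^2$. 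Summing over $\xvec$ gives $\sum_\ell \delta_\ell^{\,2} T_\ell \le 4\,OPT_i^{k\text{-means}}$. Chaining all inequalities produces $cost({\cal C}) = O(dk \log k)\cdot OPT$.

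The main obstacle is the Cauchy--Schwarz step that converts $diam(j)_i^{\,2}$ into $k$ times a sum of squared gaps: this is precisely the source of the extra factor of $k$ compared to the $k$-medians bound of $O(d \log k)$, and it mirrors the $k$-vs-$k^2$ discrepancy observed in \cite{dasgupta2020explainable}. All other ingredients (the $d$ from combining coordinate trees, the $\log k$ from the non-uniform BST construction, and the projection/counting lemma for squared gaps) are essentially direct adaptations of the $k$-medians machinery and should go through without further surprises.
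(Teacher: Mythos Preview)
Your proposal is correct and follows essentially the same route as the paper: replace $diam(j)_i$ by $diam(j)_i^{\,2}$ in the objective $UB_i$, reuse Algorithm~\ref{alg:randLearner} and the disjointness claim of Lemma~\ref{lem:tree-combination} verbatim (with $\|diam(v)\|_2^2 \le d\cdot diam(v)_{i_v}^2$), and obtain the extra factor $k$ from the quadratic expansion of $diam(j)_i^{\,2}$ via Cauchy--Schwarz/Jensen before invoking the Charikar et al.\ tree. The only cosmetic differences are constants---you carry an explicit factor $2$ in the cost inequality and a factor $4$ in the squared projection lemma, whereas the paper cites the cost inequality from \cite{dasgupta2020explainable} without the $2$ and states the projection bound with a $2$---none of which affects the $O(dk\log k)$ conclusion.
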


From an  algorithmic perspective, in order to establish the theorem, we only need to replace the definition of 
$UB_i({\cal D}')$ for a tree ${\cal D}'$ in ${\cal F}_{a,b}$
with
$$UB'_i({\cal D'})=\sum_{j=a}^{b-1} T_j \cdot (diam(j)_i)^2.$$

Note that the only difference is the replacement of 
$ diam(j)_i$ with  $(diam(j)_i)^2$.
As a consequence, for the $k$-means problem, the tree ${\cal D}_i$ is 
defined as the tree ${\cal D}'$ in ${\cal F}_{1,k}$ for which 
$ UB'_i({\cal D}')$ is minimum. It can  also be constructed via dynamic programming.
Theorem \ref{thm:k-means} can be proved 
 by using arguments similar to those employed to bound
 the price of explainability for $k$-medians. 
 \ifICML 
 \else
 The following inequalities are, respectively, counterparts of the 
inequalities (\ref{eq:23NovUB}), (\ref{eq:23NovUB2}), (\ref{eq:nov19-3}) and (\ref{eq:nov19}):

\begin{equation}
cost({\cal C}) \le OPT + \sum_{v \in {\cal D}} t_v ||diam(v)||^2_2, 
\label{eq:k-means-bound-0}
\end{equation}

\begin{equation}
\label{eq:k-means-bound-1}
UB'_i({\cal D}_i) \le 2 k \log k \left ( \sum_{j=1}^{k-1} (c_i^{j+1}-c^j_i)^2 \cdot  T_j \right),
\end{equation}



\begin{equation}
\label{eq:k-means-bound-2}
 \left ( \sum_{j=1}^{k-1} (c_i^{j+1}-c^j_i)^2 \cdot T_j \right)/2
  \le  OPT_i,
 \end{equation}
%

\begin{equation}
\label{eq:nov19-kmeans}
\sum_{v \in {\cal D}} t_v ||diam(v)||^2_2 \le d \sum_{i=1}^d UB'_i ({\cal D}_i).
\end{equation}

From the three last  inequalities  and  the identity  $OPT=\sum_{i=1}^d OPT_i$,
we obtain 
$$\sum_{v \in {\cal D}} t_v ||diam(v)||^2_2 \le  4 d k \log k \cdot OPT.$$
This together with the inequality
(\ref{eq:k-means-bound-0}) allows us to establish Theorem \ref{thm:k-means}.

Inequality (\ref{eq:k-means-bound-0}) is proved in \cite{dasgupta2020explainable}.
The validity  of inequalites  
(\ref{eq:k-means-bound-2}) and 
(\ref{eq:nov19-kmeans}) can be established  by using exactly the same arguments employed
to prove their counterparts. More specifically,
the proof of Lemma \ref{lem:k-means-lower-bound} can be used  for 
 the former while the proof of Lemma \ref{lem:tree-combination} can be used for the latter.

\remove{
Inequality (\ref{eq:k-means-bound-1}) incurs an
extra factor of $k$ w.r.t. its counterpart.
This can be proved via a simple adaptation of the proof employed to establish  Lemma \ref{lem:UB-k-medians-1},
in particular Equation (\ref{eq:23Nov-expr}).
Lemma \ref{} in Appendix shows the correctness of inequality  (\ref{eq:k-means-bound-1}) .
}

The inequality (\ref{eq:k-means-bound-1}) incurs an
extra factor of $k$ with respect to  its counterpart.
In order to prove  this inequality,  we apply the arguments of  the proof of Lemma \ref{lem:UB-k-medians-1}.
The only required adaptation  consists of  replacing 
Equation (\ref{eq:23Nov-expr})
with the inequality
\begin{equation}
 UB'_i({\cal D}_i) \le k \sum_{j=1}^{k-1} T_j \cdot  \sum_{\ell=r(j)}^{s(j)-1} (c^{\ell+1}_i -c^{\ell}_i)^2.
\label{eq:21dc}
\end{equation}
Inequality (\ref{eq:21dc})  holds because 
$$ UB'_i({\cal D}_i) = \sum_{j=1}^{k-1} T_j  \cdot ( c_i^{s(j)}-c_i^{r(j)})^2, $$ and a simple application
of  
Jensen's inequality assures that 
$$(c_i^{s(j)}-c_i^{r(j)})^2 \le k \sum_{\ell=r(j)}^{s(j)-1} (c^{\ell+1}_i -c^{\ell}_i)^2.$$

\fi

\subsection{A practical algorithm} 
We propose  a simple greedy algorithm, denoted by {\tt Ex-Greedy}, for building explainable clustering for the $k$-means problem.
We provide  evidence that it 
 performs very well in practice.

The algorithm starts with the set $S$ of representatives of an unrestricted $k$-clustering ${\cal C}^{ini}$
for the dataset ${\cal X}$ and then  builds a decision tree ${\cal D}$ with $k$ leaves,
where each of them includes exactly one representative from $S$.

Let $u$ be a node of the decision tree and let ${\cal X}^u$ and ${\cal S}^u$ be,
respectively, the set of points and the set of reference centers (representatives of ${\cal C}^{ini}$)  that reach
$u$.  We define the cost of
a partition  $(L,R)$  of the points in ${\cal X}^u \cup {\cal S}^u$ as  


\begin{align*}
 cost(L,R)= & \sum_{\xvec \in L \cap {\cal X}^u } \min_{\cvec \in L \cap {\cal S}^u } || \xvec - \cvec ||_2^2 + \\ 
 & \sum_{\xvec \in R \cap {\cal X}^u} \min_{\cvec \in R \cap {\cal S}^u } || \xvec - \cvec ||_2^2. 
\end{align*}

\remove{
The algorithm selects at each node, including more than one reference center, the axis-aligned cut that induces a partition with minimum cost.
}

To split a node $u$, that is reached by more than one representative, {\tt Ex-Greedy}  selects 
 the axis-aligned cut that induces a partition with minimum cost.

\remove{
The next theorem gives an upper bound on the  cost of the explainable clustering obtained by {\tt Ex-Greedy}. 

\begin{thm}
If {\tt Ex-Greedy} starts with an unrestricted $k$-clustering ${\cal C}^{ini}$
then it obtains a clustering ${\cal C}$ that satisfies 
$$cost({\cal C}) \le O(k^2) cost({\cal C}^{ini}).$$
\end{thm}

This result can be established by using the same arguments
employed to prove that the bound of IMM. 
}

{\tt Ex-Greedy} can be implemented in  $O( n d  k H + n d \log n)$ time,
where $H$ is the depth of the resulting decision tree.
Note that $H \le k$ and in many relevant applications $k$ is small.
The time complexity corresponds to $H$ iterations of Lloyd's $k$-means algorithm.
\ifICML 
\else

\subsubsection{An efficient implementation}
To achieve this time complexity, in the preprocessing phase,
{\tt Ex-Greedy} builds the following data structures: 

\begin{itemize}

\item a list ${\tt SL_i}$, for each $i \in [d]$, containing the points in ${\cal X} \cup S$
sorted by coordinate $i$; 
\item  a list  ${\tt M}_{\xvec}$ of size $k$, for each $\xvec \in {\cal X}$, that stores
the $k$ centers sorted  by increasing order of their distances to $\xvec$.
\end{itemize}

The lists ${\tt SL_i}$ can be built in $O( d n \log n )$ time and the lists 
${\tt M}_{\xvec}$  in $O(n k \log k )$ time. 

\medskip

To decide how to split  the root  the algorithm finds  the partition with minimum cost
for each coordinate $i \in [d]$ and then selects the one with minimum cost among them.

Fix $i \in [d]$. The algorithm scans the list  ${\tt SL_i}$ from left to the right and 
evaluates the cost of  $n-k+1$ partitions where the $j$-th
one, namely $(L_j,R_j)$,  separates the first $j$ points in 
${\tt SL_i}$ from the remaining ones. During the scan
the algorithm makes use of two vectors of size $n$, ${\tt V}_L$ and ${\tt V}_R$.
Right after evaluating $(L_j,R_j)$,  ${\tt V}_L$ (resp. ${\tt V}_R$) stores,
for each $\xvec$ that lies at $L_j$ (resp. $R_j$),
the center that is closest to $\xvec$ among those that also lie
in   $L_j$ (resp. $R_j$).
The only difference is that 
${\tt V}_L[\xvec]$  stores the center directly while
${\tt V}_R[\xvec]$   stores the position of the center  in  ${\tt M}_{\xvec}$.

Let us consider the  moment in which  the algorithm has just calculated  the cost
${\tt Cost_{j}}$ of the $j$th partition $(L_j,R_j)$.
  To obtain ${\tt Cost_{j+1}}$ and update ${\tt V}_L$ and ${\tt V}_R$,   the algorithm
  first set ${\tt Cost_{j+1}}={\tt Cost_{j}}$ and then proceeds according to the following  cases:

\medskip

{\it Case 1}. The $(j+1)$-th point in  ${\tt SL_i}$ corresponds to a point $\xvec$ in ${\cal X}$. Then, the algorithm evaluates ${\tt Cost_{j+1}}$  in $O(k)$ time as follows: 
\begin{itemize}
\item[i] it obtains the center $\cvec_R$ in  $R_j$ that is closest to $\xvec$. This is
done in $O(1)$ time since ${\tt V}_R[\xvec]$ points to this center; 
\item[ii] By scanning
${\tt M_{\xvec}}$ it obtains the center $\cvec_L$ in $L_{j+1}$ that is closest
to $\xvec$ and then 
 updates ${\tt V}_L[\xvec]$  to  $\cvec_L$ . This requires  $O(k)$ time
\item[iii] it updates  ${\tt Cost_{j+1}}$ to  ${\tt Cost_{j}}  +  || \xvec  - \cvec_L ||_2^2 -  || \xvec-\cvec_R||_2^2$.
\end{itemize}

\medskip

{\it Case 2}.  The $(j+1)$-th point in   ${\tt SL_i}$ corresponds to a reference center $\cvec$ in $S$.Then, the algorithm evaluates ${\tt Cost_{j+1}}$  in  $O(n)$ amortized time as follows: 

\begin{itemize}
\item[i]  
 for each point $\xvec$  in $L_j$, {\tt Ex-Greedy} 
compares $||\cvec-\xvec||_2^2$ with
$|| {\tt V}_L[x] -\xvec||_2^2$.
If $\cvec$ is the  closest then 
it updates ${\tt Cost_{j+1}}$ to ${\tt Cost_{j+1}}+ || \xvec-\cvec||_2^2 -
|| \xvec-  {\tt V}_L[x] ||_2^2$ and ${\tt V}_L[ \xvec]$ to $\cvec$.
This requires $O(n)$ time.

\item[ii] for each point $\xvec$  in $R_j$
it verifies whether $V_R[\xvec]$ points to $\cvec$.  In the negative case, nothing is done.
In the positive case,
it scans  ${\tt M}_\xvec$, starting from  $V_R[\xvec]$ towards to its end ,
until it finds a center $\cvec'$ that  lies in $R_j$. 
Then it updates ${\tt Cost_{j+1}}$ to
${\tt Cost_{j}}- || \xvec-\cvec||_2^2 + || \xvec-\cvec'||_2^2.$  
This operation  requires $O(n)$ amortized time since the total
cost spent on these scans, when we  take into account moving the $k$ centers, is
$O(nk)$. 

\end{itemize}

The algorithm applies the cut with minimum cost and then recurses on each the children of the root.
To process a child $u$ of the root, the implementation updates the data structures
${\tt SL_i}$ and ${\tt M_{\xvec}}$  to only comprise the
points and the reference center that reach $u$.
Each list ${\tt SL_i}$ can be updated in $O(n)$ time by removing the points
and the reference centers that do not reach $u$.
Similarly, each list  ${\tt M_{\xvec}}$ can be updated in $O(k)$ time by removing points and the reference centers that do not reach $u$.

\remove{
\subsubsection{An efficient implementation}
To achieve the time complexity, in the preprocessing phase,
{\tt Ex-Greedy} builds the following data structures: 

\begin{itemize}

\item a list ${\tt SL_i}$, for each $i \in [d]$, containing the points in ${\cal X} \cup S$
sorted by coordinate $i$; 
\item a doubly linked list
${\tt RC_{\xvec}}$, for each point $\xvec \in {\cal X}$, containing the reference centers sorted by their distances to
point $\xvec$;
\item  a matrix  ${\tt M}$ of dimensions $n \times k$,
where the entry  associated with $(\xvec,\cvec)$  points to the cell corresponding to  $\cvec$ in the linked list
${\tt RC_{\xvec}}$.  
\end{itemize}
The lists ${\tt SL_i}$ can be built in $O( d n \log n )$ time,
${\tt RC_{\xvec}}$ can be constructed in $O(n k \log k) $ time and 
${\tt M}$  in $O(n k )$ time.

To decide how to split  the root  the algorithm finds  the partition with minimum cost
for each coordinate $i \in [d]$ and then selects the one with minimum cost among them.

Fix $i \in [d]$. The algorithm scans the list  ${\tt SL_i}$ from left to the right and 
evaluates the cost of  $n-k+1$ partitions where the $j$-th
one  separates the first $j$ points in 
${\tt SL_i}$ from the remaining ones. 
Let us consider the  moment in which  the algorithm has just calculated  the cost
 ${\tt Cost_{j}}$ of the $j$th partition $(L_j,R_j)$.
We assume that  at this moment the algorithm has access to a data structure ${\tt CL}$ in which ${\tt CL}(\xvec)$  stores 
 the reference center that is  closest to $\xvec$, among those that lie in the $L_j$. 
In addition,  we assume that it has access to  the data structure  
 ${\tt RC^j_{\xvec}}$, which is equivalent to the structure ${\tt RC_{\xvec}}$ but only containing the
 reference centers in $R_j$.
Initially, when $j=0$,
 ${\tt CL}$ only stores null values and ${\tt RC^j_{\xvec}}={\tt RC_{\xvec}}$.  
  To obtain ${\tt Cost_{j+1}}$ and update  both ${\tt RC^j_{\xvec}}$ and  ${\tt CL}$,   the algorithm
  first set ${\tt Cost_{j+1}}={\tt Cost_{j}}$ and then proceeds according to the following  cases:

\medskip

{\it Case 1}. The $(j+1)$-th point in  ${\tt SL_i}$ corresponds to a point $\xvec$ in ${\cal X}$. Then, the algorithm evaluates
 ${\tt Cost_{j+1}}$  in $O(k)$ time as follows: (i) By using the structure
${\tt RC_{\xvec}}$ it calculates in $O(k)$ time ${\tt CL ( \xvec)}$ ;
(ii) By using structure ${\tt RC^j_{\xvec}}$ it finds in $O(1)$ time the center $\cvec$ in 
$R_j$ that is closest to $\xvec$;
(iii) it updates  ${\tt Cost_{j+1}}$ to  ${\tt Cost_{j}}  +  || \xvec  - {\tt CL ( \xvec)}  ||_2^2 -  || \xvec-\cvec||_2^2$.

\medskip

{\it Case 2}.  The $(j+1)$-th point in   ${\tt SL_i}$ corresponds to a reference center $\cvec$ in $S$.
In this case, for each point $\xvec$  in $L_j$, {\tt Ex-Greedy} 
compares $||\cvec-\xvec||_2^2$ with
$||{\tt CL( \xvec)}-\xvec||_2^2$.
If $\cvec$ is the  closest then 
it
 updates ${\tt Cost_{j+1}}$ to ${\tt Cost_{j+1}}+ || \xvec-\cvec||_2^2 -
|| \xvec-{\tt CL ( \xvec)} ||_2^2$ and ${\tt CL ( \xvec)}$ to $\cvec$.
This requires $O(n)$ time.

Moreover, it uses the structure 
${\tt M}$ to remove the cell corresponding
to $\cvec$ from each list ${\tt RC^j_{\xvec}}$.
If $\xvec \in R_j$  and $\cvec$ is  in the first position of 
${\tt RC^j_{\xvec}}$  then it updates ${\tt Cost_{j+1}}$ to
${\tt Cost_{j}}- || \xvec-\cvec||_2^2 + || \xvec-\cvec_{new}||_2^2, $
where $\cvec_{new}$ is the center that goes to the first position of ${\tt RC^j_{\xvec}}$ when 
$\cvec$ is removed. This computation also requires $O(n)$ time.  


\medskip

The algorithm applies the cut with minimum cost and then recurses on each the children of the root.
To process a child $u$ of the root, the implementation updates the data structures
${\tt SL_i}$, ${\tt RC_{\xvec}}$  ${\tt M}$ to only comprise the
points and the reference center that reach $u$.
Each list ${\tt SL_i}$ can be updated in $O(n)$ time by removing the points
and the reference centers that do not reach $u$.
The structures  ${\tt RC_{\xvec}}$ can be updated in $O(nk)$ time, again,
points and the reference centers that do not reach $u$.
Finally, after updating the structures  ${\tt RC_{\xvec}}$, ${\tt M}$ can
be rebuilt  in $O(n k)$ time.

}

\subsubsection{Experiments}

\fi

\ifICML

\cite{frost2020exkmc} compared  6 methods that build explainable clusterings,  over 10 datasets.
For trees with $k$ leaves, the {\tt IMM} algorithm proposed in \cite{dasgupta2020explainable} obtained the best results, or was very close to it, 
for all datasets but one ({\tt CIFAR-10}).

\begin{table}[]
\caption{Comparison of {\tt Ex-Greedy} and {\tt IMM} over 10 datasets}
\label{tbl:experiments}
\begin{center}
\begin{tabular}{c|c|c|c}
Dataset 	& k & {\tt IMM} &	{\tt Ex-Greedy} \\  \hline 
{\tt breast\_cancer} & 2 & 1,00 & 1,00 \\
{\tt iris}          & 3 & 1,04 & 1,04 \\
{\tt wine}         & 3  & 1,00 & 1,00 \\
{\tt covtype}      & 7  & 1,03  & 1,03  \\
{\tt mice}         & 8  & 1,12 & {\bf 1,08} \\
{\tt digits }      & 10  & {\bf 1,23 } & 1,24 \\
{\tt anuran}       & 10  & 1,30     & {\bf 1,15}     \\
{\tt CIFAR-10}     & 10  &  1,23    & {\bf 1,17}     \\
{\tt avila}        & 12  & 1,10 & 1,10 \\
{\tt newgroup}     & 20  &  1,01     & 1,01     \\
\end{tabular}
\end{center}
\end{table}

\else

\cite{dasguptaexplainable-workshop,frost2020exkmc} compared  6 methods that build explainable clusterings,  over 10 datasets. These methods also allow the construction of decision trees with more than $k$ leaves but this is not
relevant for our experiments.
For trees with $k$ leaves, the {\tt IMM} algorithm proposed in \cite{dasgupta2020explainable} obtained the best results, or was very close to it, 
for all datasets but one ({\tt CIFAR-10}).

\begin{table}[]
\caption{Comparison of {\tt Ex-Greedy} and {\tt IMM} over 10 datasets}
\label{tbl:experiments}
\begin{center}
\begin{tabular}{c|c|c|c|c|c}
Dataset 	 & n & d & k & {\tt IMM} &	{\tt Ex-Greedy} \\  \hline 
{\tt BreastCancer} & 569 & 30 & 2 & 1.00 & 1.00 \\
{\tt Iris}        & 150 & 4 & 3  & 1.04 & 1.04 \\
{\tt Wine}      & 178 & 13 &  3  & 1.00 & 1.00 \\
{\tt Covtype}    & 581,012 & 54 & 7   & 1.03  & 1.03  \\
{\tt Mice}        & 552 & 77 &  8  & 1.12 & {\bf 1.09} \\
{\tt Digits }     & 1,797 & 64 & 10 &  1.23 & {\bf 1.21} \\
{\tt CIFAR-10}       & 50,000 & 3,072 & 10 &  1.23    & {\bf 1.17} \\
{\tt Anuran}      & 7,195 & 22 & 10  & 1.30     & {\bf 1.15}     \\
{\tt Avila}         & 20,867 & 12 & 12 & 1.1 & {\bf 1.09} \\
{\tt Newsgroups} & 18,846 & 1,069 & 20 &  1.01     & 1.01     \\
\end{tabular}
\end{center}
\end{table}

\fi

%

Given the success of {\tt IMM}, we compared it with our method  {\tt Ex-Greedy}  on the same datasets. The column {\tt IMM} (resp. {\tt Ex-Greedy}) of Table \ref{tbl:experiments} shows the average ratio between the cost of the clustering
obtained by   {\tt IMM} (resp.  {\tt Ex-Greedy})   and that
 of the initial unrestricted clustering ${\cal C}^{ini}$ produced by scikit-learn's {\tt KMeans} algorithm \cite{scikit-learn}.
 \ifICML Additional details about the implementation and the experiments can be found
 in the supplementary material.
\else
Following \cite{frost2020exkmc}, the value of  $k$ is the number of classes for the classification task 
associated with the dataset.

   Each dataset was run for 10 iterations, with random seeds from 1 to 10, to ensure the reproducibility of results. For each iteration, we initially achieve an unrestricted solution ${\cal C}^{ini}$ by running the {\tt KMeans} algorithm provided in the {\tt scikit-klearn} package with default parameters. We then pass ${\cal C}^{ini}$ to the implementation of {\tt IMM} from \cite{frost2020exkmc}, available at \url{https://github.com/navefr/ExKMC}, and to our implementation of {\tt Ex-Greedy}, to find two explainable clustering solutions induced by decision trees.

\fi
 
\ifICML 
For 6  datasets, the results were very similar;
for one of them, {\tt Digits}, {\tt IMM} performed somewhat better than {\tt Ex-Greedy}; and, for the remaining 3 datasets,  
{\tt Ex-Greedy} performed more significantly better than {\tt IMM}. 
Overall, we understand that
the new method outperformed {\tt IMM} across these datasets.
\else

For 5  datasets, the results were very similar while 
for the others (bold in Table \ref{tbl:experiments})   {\tt Ex-Greedy} performed  better than {\tt IMM}.
Figure \ref{fig:boxplot} presents box plots  for the 5  datasets where there was a difference
of at least 0.01 on the average results.  It is interesting to note that 
the dispersion of {\tt Ex-Greedy} is considerably smaller.

In terms of  running time both methods spent less than 1 second, for 6 datasets.
For the remaining datasets {\tt IMM} was the  fastest as shown in  Table
\ref{tbl:experiments-runtime}. In spite of that, we understand that {\tt Ex-Greedy}
is fast enough to be used in practice.



\begin{center}
\begin{figure}
\caption{Box Plots for the datasets with  difference at least 0.01 }
\label{fig:boxplot}

		 \includegraphics[scale=0.45]{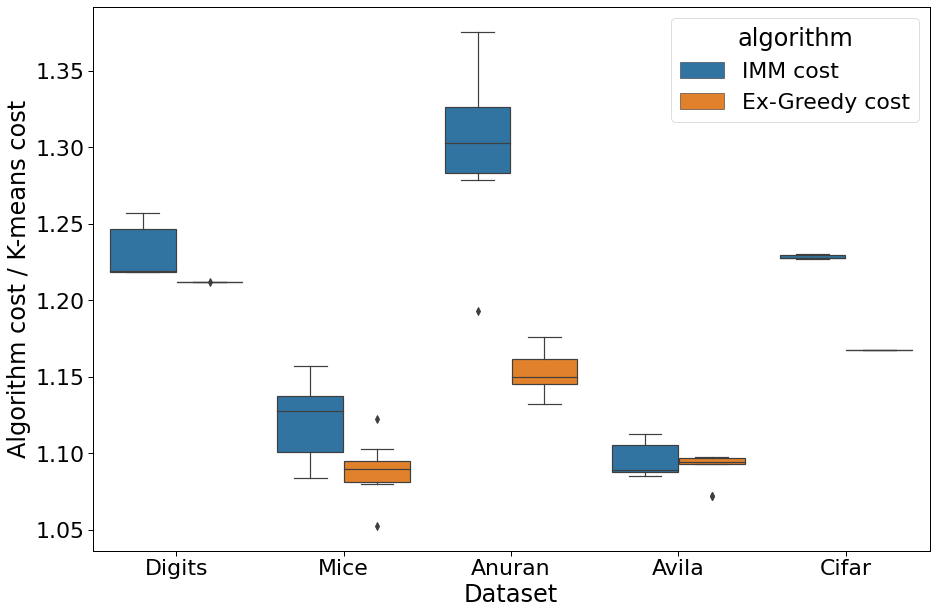}
\end{figure}
\end{center}

\begin{table}[]
\caption{Average running times in seconds for  {\tt Ex-Greedy} and {\tt IMM}}
\label{tbl:experiments-runtime}
\begin{center}
\begin{tabular}{c|c|c}
Dataset 	 & {\tt IMM} (sec)&	{\tt Ex-Greedy} (sec)\\  \hline 
{\tt Avila}      &  1.7 & 2.4 \\
{\tt Covtype}      & 42  & 53  \\
{\tt Newsgroups}   & 41     & 102     \\
{\tt CIFAR-10}    & 312    & 378 
\end{tabular}
\end{center}
\end{table}

\subsubsection{Details of the experimental settings and the datasets}
All our experiments were executed in a  MacBook Air, 8Gb of RAM, processor 1,6 GHz Dual-Core Intel Core i5, 
executing   macOS Catalina, version 10.15.7.
Our code is availble in \url{https://github.com/lmurtinho/ExKMC.}

The datasets {\tt Iris}, {\tt  Wine}, {\tt Breast Cancer}, {\tt Digits}, {\tt Covtype},
{\tt Mice} and {\tt Newsgroup}
 are available in  Python's {\tt scikit-learn};
 {\tt Cifar-10} is available in {\tt TensorFlow};
{\tt Anuran} and {\tt Avila} were downloaded from UCI.

For {\tt Mice},  the examples with missing
values were removed.
For {\tt Avila}, the training set and the testing set are used together.
Finally, for {\tt Newsgroup}, we removed headers, footers, quotes, stopwords, and words that
either appear in less than $1\%$ or more than  
 $10\%$ of the documents, following \cite{frost2020exkmc}.

\fi

\section{Maximum-Spacing Clustering}
We show that the price of explainability  
for the  maximum-spacing  problem is $\Theta(n-k)$.

\subsection{Lower bound}
The following simple construction shows that 
the price of explainability  is $ \Omega(  n-k) $.

Let $C_1=\{(0,i)| 0 \le i \le p\}  \cup
 \{(i,0)| 0 \le i \le p\}$.
Moreover,  for $i=2,\ldots,k$,
 let $C_i=\{(i-1)(p-1),(p-1)\}$.
 The dataset ${\cal X}$ for our instance is given by $C_1 \cup \ldots \cup C_k$.



The unrestricted $k$-clustering $(C_1,\ldots,C_k)$ has spacing $p-1=(n-k)/2-1$.
On the other hand, every explainable $k$-clustering has
spacing 1. To  see that,  note that we cannot have all the points of $C_1 \cup C_2$ in the same
cluster, for otherwise we would have at most $k-1$ clusters.
Thus, we need to separate at least 2 points from $C_1 \cup C_2$ and the only way to accomplish that,
via axis-aligned  cuts, forces  the  separation of 2  points in $C_1$ that are at distance 1 from each other. Thus,
 the spacing will be 1.  
 
\begin{lemma}
The price of explainability for the maximum-spacing clustering problem    is $\Omega(n-k)$.
\end{lemma}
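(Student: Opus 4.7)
The plan is to use the instance exhibited above and argue in two stages: first, verify that the natural unrestricted clustering $(C_1,\ldots,C_k)$ has spacing $p-1$, and second, show that every explainable $k$-clustering has spacing at most $1$. Since $|C_1|=2p+1$ and the remaining $C_i$ are singletons, we have $n = 2p+k$, hence $p-1 = \Omega(n-k)$, and the ratio of these two spacings will yield the claimed lower bound on $\rho$.

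For the unrestricted side, I will verify that every inter-cluster distance is at least $p-1$. The closest point of $C_1$ to any singleton $C_i = \{((i-1)(p-1), p-1)\}$ is realized at one of the axis points of $C_1$, giving distance at least $p-1$, while two distinct singletons are spaced by at least $p-1$ along the $x$-axis. Hence the spacing of the unrestricted clustering equals $p-1$.

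For the explainable side, the first observation is that $C_1 \cup C_2$ cannot lie entirely in a single leaf: if it did, the remaining $k-2$ singletons could fill at most $k-2$ further leaves, giving at most $k-1$ non-empty clusters and contradicting the requirement of $k$. Hence some internal node of the decision tree applies an axis-aligned cut that non-trivially splits the subset of $C_1 \cup C_2$ reaching it. I will then argue, by a short case analysis on whether the cut is on the $x$- or $y$-axis, that such a cut necessarily separates two consecutive axis points of $C_1$. The key fact is that every point of $C_1 \cup C_2$ has both coordinates in $\{0,1,\ldots,p\}$, and each axis of $C_1$ realizes every integer in this range, so any separating threshold must lie in an interval $(j, j+1)$ with $0 \le j \le p-1$, thereby separating $(j,0)$ from $(j+1,0)$ or $(0,j)$ from $(0,j+1)$, a pair at Euclidean distance $1$ that lands in two distinct leaves.

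The main obstacle is this geometric case analysis, and in particular confirming that no single axis-aligned cut can isolate $C_2 = \{(p-1,p-1)\}$ from all of $C_1$: both $(p-1,0)$ and $(0,p-1)$ share a coordinate with $C_2$, so any path from the root to the leaf holding $C_2$ must cross at least one cut that detaches a consecutive axis pair in $C_1$. Once such a pair is separated, it remains in two distinct leaves of the final tree, pinning the explainable spacing at most $1$. Combined with the $p-1$ lower bound on the unrestricted spacing, this yields $\rho \ge p-1 = \Omega(n-k)$, as required.
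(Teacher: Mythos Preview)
Your proposal is correct and follows essentially the same construction and argument as the paper: the paper's proof is precisely the text immediately preceding the lemma, and you have simply fleshed out the geometric case analysis in more detail. One small sharpening worth making explicit when you write it up: choose the \emph{highest} internal node whose cut separates two points of $C_1\cup C_2$, so that all of $C_1\cup C_2$ (in particular the full axis arms of $C_1$) still reach that node---this is what guarantees the consecutive axis pair $(j,0),(j{+}1,0)$ or $(0,j),(0,j{+}1)$ you identify is actually present to be split.
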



\subsection{Upper Bound}
We present an algorithm that always  obtains 
an explainable clustering with spacing $O((n-k) OPT)$, 
where $OPT$ is the spacing of the optimal unrestricted clustering.
That, together with the previous lemma, implies that the price of 
explainability  for the maximum-spacing problem is $\Theta(n-k)$.

Algorithm \ref{alg:single-link} receives  an optimal $k$-clustering ${\cal C}^* $ as input and 
uses it as  a guide to transforming an initial single cluster containing  all points of ${\cal X}$
into an explainable $k$-clustering.
The existence of cluster $C$ at line (*) follows from a simple pigeonhole argument.
The motivation for this choice is that $C$ has two points at distance at least $OPT$,
which is used to show the existence of a cut with a large enough margin.



\begin{algorithm}[H]
  \caption{{\tt Ex-SingleLink}(${\cal X}$)}
  \begin{algorithmic}[]
  	\small
		
		\STATE ${\cal C}^* \leftarrow$  optimal unrestriced $k$-clustering for points in ${\cal X}$.

		\STATE ${\cal C} \leftarrow $ single cluster containing all points of ${\cal X}$

\FOR{$i=1,\ldots,k-1$}

\STATE Select a cluster $C \in {\cal C}$ that contains two points that lie in
different clusters in ${\cal C}^*$. \,\, (*)

\STATE Split $C$  using an axis-aligned cut that
yields  a 2-clustering $(C',C'')$ with maximum possible spacing.

\STATE Remove  $C$ from ${\cal C}$ and update ${\cal C}$ to 
${\cal C}  \cup \{C',C''\}$


\ENDFOR

  \end{algorithmic}
  \label{alg:single-link}
\end{algorithm}

\begin{lemma} 
Given a set of points ${\cal X}$, {\tt Ex-SingleLink}(${\cal X}$)  obtains a $k$-clustering ${\cal C}$ with  spacing at least $OPT/(n-k)$,
where $OPT$ is the  spacing of an optimal unrestricted clustering.
\end{lemma}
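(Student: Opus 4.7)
The overall plan is to show that each of the $k-1$ axis-aligned cuts produced by {\tt Ex-SingleLink} has $\ell_2$-spacing on the order of $OPT/(n-k)$, so that $sp({\cal C})$---which is the smallest of these cut-spacings---inherits the bound.

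First, I would verify that line (*) is always executable. At the start of iteration $i\in\{1,\ldots,k-1\}$ the current partition has $i$ clusters while ${\cal C}^*$ has $k>i$, and both partition ${\cal X}$. If every current cluster were contained in a single cluster of ${\cal C}^*$, we would obtain a map from the current partition into ${\cal C}^*$ whose image has size at most $i<k$, contradicting the fact that every cluster of ${\cal C}^*$ is nonempty. Hence some current cluster $C$ intersects two distinct clusters $C^*_a\neq C^*_b$ of ${\cal C}^*$, and picking $\xvec\in C\cap C^*_a$, $\yvec\in C\cap C^*_b$ gives $\|\xvec-\yvec\|_2\ge sp({\cal C}^*)=OPT$.

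Next, I would lower bound the spacing of the cut chosen in iteration $i$. From $\|\xvec-\yvec\|_2\ge OPT$ and the standard inequality $\|\cdot\|_2\le\sqrt{d}\,\|\cdot\|_\infty$, some coordinate $i^*$ satisfies $|x_{i^*}-y_{i^*}|\ge OPT/\sqrt{d}$. Projecting the $|C_i|$ points of $C$ onto coordinate $i^*$ and sorting them, the $|C_i|-1$ consecutive gaps along $i^*$ sum to at least $|x_{i^*}-y_{i^*}|$, so by pigeonhole the largest gap has size at least $OPT/(\sqrt{d}(|C_i|-1))$. Placing the cut $(i^*,\theta)$ with $\theta$ in this gap splits $C$ into $(C',C'')$ with $\|c'-c''\|_2\ge|c'_{i^*}-c''_{i^*}|$ bounded below by the gap size for all $c'\in C'$, $c''\in C''$; since the algorithm selects the cut of maximum spacing, iteration $i$ produces a split of spacing at least $OPT/(\sqrt{d}(|C_i|-1))$.

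Finally, any pair of points lying in different clusters of the final ${\cal C}$ was separated at exactly one of the $k-1$ splits, so $sp({\cal C})$ is at least the minimum over iterations of the split-spacing. Using only $|C_i|\le n$ this already yields $sp({\cal C})\ge OPT/(\sqrt{d}(n-1))$, which matches $\Omega(OPT/(n-k))$ in the regime $d=O(1)$ and $n-k=\Theta(n)$. The step I expect to be the main obstacle is sharpening this estimate to the target $OPT/(n-k)$ at the first iteration, where $|C_1|=n$ is forced by the initial state; closing the gap between $n-1$ and $n-k$ (and absorbing the $\sqrt{d}$ factor) likely requires a cleverer choice of the pair $(\xvec,\yvec)$ inside $C$---so that the number of points of $C$ strictly between them along coordinate $i^*$ is $O(n-k)$---or an amortized argument over the $k-1$ iterations rather than the worst-case per-split bound above.
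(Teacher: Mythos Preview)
Your proposal is honest about its own gaps, but the missing pieces are genuinely essential rather than cosmetic, and the single-coordinate pigeonhole route cannot be patched to reach the stated bound.

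The $\sqrt{d}$ loss is intrinsic to projecting onto one coordinate: from $\|\xvec-\yvec\|_2\ge OPT$ you can only guarantee $|x_{i^*}-y_{i^*}|\ge OPT/\sqrt{d}$, and no cleverer choice of $(\xvec,\yvec)$ inside $C$ improves this in the worst case. The paper avoids this loss by arguing over \emph{all} coordinates simultaneously. It fixes the target threshold $OPT/(n-k)$ up front, builds the graph on $C$ whose edges join pairs at distance below this threshold, and lets $F$ be a spanning forest. If no axis-aligned cut separates two trees of $F$, then for every coordinate $j$ the projections of the trees must cover the interval $[\,p_j,q_j\,]$; summing the resulting inequalities over $j$ and applying Jensen yields $\sum_{e\in F}\|e\|_2^2 \ge \|\pvec-\qvec\|_2^2/f\ge OPT^2/f$, so some edge of $F$ has length at least $OPT/f$. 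This all-coordinate covering plus Jensen step is precisely what absorbs the $\sqrt{d}$.

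The replacement of $n-1$ by $n-k$ also does not come from a per-iteration or amortized refinement of your gap argument. In the paper it comes from bounding $f$, the number of edges of $F$: distinct optimal clusters intersecting $C$ land in distinct trees of $F$, and a short counting argument (using that the current partition already has some clusters, including the singletons) gives $f\le |C|-|C\cap S|\le n-k$. That immediately contradicts the existence of an edge of length $\ge OPT/f\ge OPT/(n-k)$ in a graph whose edges are all shorter than $OPT/(n-k)$. So the desired cut exists, and the algorithm (which picks the best cut) achieves spacing $\ge OPT/(n-k)$ at every split.
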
  


\ifICML

\else

\begin{proof}

First, we observe that it is always possible to properly execute line (*) of 
 {\tt Ex-SingleLink}.
In fact,  if we pick $k$ points covering all the $k$ clusters of  ${\cal C^*}$ then,
by the pigeonhole principle, two of them will lie in the same group in ${\cal C}$ since
${\cal C}$ has less than $k$ groups when line (*) is executed.

To establish the result it suffices to prove  that there is always an axis-aligned  cut that splits
the selected cluster  $C$
into two clusters with spacing at least $OPT/(n-k)$.

Let $\pvec$ and $\qvec$ be two points  in $C$  that lie in distinct  clusters
in ${\cal C}^*$ and 
let $G=(V,E)$ be a graph, where $V$  is the set of points in $C$ and $E$
connects points in $V$ with distance smaller than $OPT/(n-k)$.   
Moreover, let $F=(T_1,\ldots,T_\ell)$ be a forest that is obtained by  running Kruskal's MST algorithm
on  
$G$.

\begin{claim}
Points in $C$ that belong to distinct  clusters of ${\cal C}^*$ must
also belong to different trees in forest $F$. 
\label{clm:04Jan}
\end{claim}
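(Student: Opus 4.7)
My plan is to argue by contradiction. Suppose that there exist points $\xvec, \yvec \in C$ lying in distinct clusters of $\mathcal{C}^*$ yet belonging to the same tree $T$ of the forest $F$. Since $T$ is a tree, there is a unique path $\xvec = \mathbf{u}_0, \mathbf{u}_1, \ldots, \mathbf{u}_\ell = \yvec$ in $T$ connecting them, and every edge of this path is, by construction of $F$, an edge of the underlying graph $G$, so each has length strictly smaller than $OPT/(n-k)$.

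The crucial observation, which I expect will make the argument very short, is that I will \emph{not} try to bound $\|\xvec - \yvec\|_2$ by invoking the triangle inequality along the whole path (which would force me to control $\ell$ and the size of $T$, neither of which is obviously small). Instead, I will apply a simple walking argument on the $\mathcal{C}^*$-cluster labels of $\mathbf{u}_0, \ldots, \mathbf{u}_\ell$: since the two endpoints lie in different clusters of $\mathcal{C}^*$, the label has to switch at least once along the path, so there must exist an index $j$ for which $\mathbf{u}_j$ and $\mathbf{u}_{j+1}$ themselves lie in two distinct clusters of $\mathcal{C}^*$. By the definition of $OPT$ as the spacing of $\mathcal{C}^*$, this already forces $\|\mathbf{u}_j - \mathbf{u}_{j+1}\|_2 \ge OPT$.

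On the other hand, $(\mathbf{u}_j, \mathbf{u}_{j+1})$ is an edge of $G$, so by construction of $G$ we have $\|\mathbf{u}_j - \mathbf{u}_{j+1}\|_2 < OPT/(n-k) \le OPT$, where the last inequality uses $n-k \ge 1$ (the boundary case $n=k$ being vacuous, since then each point is already its own cluster). This contradicts the previous lower bound and establishes the claim. The reason I do not anticipate any real obstacle is precisely that the argument is completely local: it never inspects the global geometry of $T$ or the length of the path, but only identifies a single ``bad'' edge along it.
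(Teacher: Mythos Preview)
Your argument is correct and is essentially identical to the paper's proof: both proceed by contradiction, take a path in $F$ between two points lying in distinct clusters of $\mathcal{C}^*$, observe that some edge along this path must join points from different clusters of $\mathcal{C}^*$, and derive a contradiction from the fact that such an edge would have length at least $OPT > OPT/(n-k)$. The paper's version is simply more terse.
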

\begin{proof}
For the sake of contradiction we assume that  the claim does not hold.
In this case, there would be a  path from $\pvec$ to $\qvec$
in  $F$ and this path  would have an edge joining two points that belong to different clusters in ${\cal C}^*$,
which  cannot occur since their distance is at least $OPT > OPT/(n-k)$.
\end{proof}

The previous claim implies that $\ell \ge 2$ since $\pvec$ and $\qvec$ belong to different clusters.
We say that   an axis-aligned cut is {\em good} with respect to a cluster $C$ if it satisfies the following properties:
(i) it separates the points in $C$ into  two non-empty clusters and (ii) it  
 does not separate points that lie in the same tree of $F$.
If a good cut exists,  then we can use it to split $C$ into
two clusters with spacing at least $OPT/(n-k)$ since, by construction, points in different trees have distance at least $OPT/(n-k)$. For the sake of contradiction let us assume that  such a cut does not exist.

For each $j \in [d]$ let  $I^{\pvec \qvec}_j$ be the real interval
that starts in $\min\{p_j,q_j\}$ and ends in
$\max\{p_j,q_j\}$, that is,
 $I^{\pvec \qvec}_j=[ \min\{p_j,q_j\}, \max\{p_j,q_j\}]$.


Moreover, for each tree $T$ in $F$,
let  $I^T_j$ be the interval that starts at $\min \{ x_j | \xvec \mbox{ is a node in } T\}$ 
and ends at $\max \{ x_j | \xvec \mbox{ is a node in } T\}$. Finally,
 for each edge $e=uv$ in $F$   and each 
$j \in [d]$, let $I^e_j$ be the real interval that
starts at $\min \{u_j,v_j\}$ and ends at 
$\max \{u_j,v_j\}$.
For a real interval $I$, let $len(I)$ be its length.

Since there are no good cuts,  for $j=1,\ldots,d $, 
we have
$$  \sum_{T \in F} len(I^{T}_j)
 \ge len(I^{\pvec \qvec}_j).$$  
From the triangle inequality 
we obtain
$$ \sum_{T \in F} \sum_{e \in T} len(I^e_j) \ge \sum_{T \in F} len(I^{T}_j).$$
From the two previous inequalities we get 
$$ \sum_{e \in F} len(I^e_j) = \sum_{T \in F} \sum_{e \in T} len(I^e_j) \ge len(I^{\pvec \qvec}_j).$$
A simple application of Jensen  inequality shows that
$$ \sum_{e \in F}  len(I^e_j)^2  \ge \frac{(len(I^{\pvec \qvec}_j))^2}{f},$$
where $f$ is the number of edges in $F$.
By adding the above inequality for all $j \in [d]$ we get 
$$ \sum_{e \in F} ||e||^2 _2 \ge \frac{1}{f} ||\pvec-\qvec||_2^2 \ge \frac{OPT^2}{f}, $$
where $||e||_2$ is the distance between the two endpoints of edge $e$.

The last inequality implies $ ||e||_2 \geq OPT/f,$
 for some edge $e$.  
Thus, to obtain a contradiction, it suffices to show that $f \le n-k$, since
we cannot have edges in $F$ with distance $\ge OPT/(n-k)$

 To see that  $f \le n-k$,
 let $k'$ be the number of clusters in ${\cal C}$ 
 that are singletons and let $S'$ be the set of points in these clusters.
Moreover, let $S \subseteq {\cal X} -S '$  be a set of $k-k'$ points
with each of them belonging to a different cluster    in ${\cal C}^*$.
Note that cluster  $C$  is not a singleton since $\pvec,\qvec \in C$. 
Since both $C$ and $S$ are  subsets of ${\cal X}-S'$ we have 
 $|C \cup S| = |C| +|S| - |C \cap S| \le n -k'$ so that
 $|C| - |C \cap S| \le n-k$.
It follows from Claim \ref{clm:04Jan} that
the number of trees in $F$ is at least $|C \cap S|$ and, as a result, its number of edges $f$ satisfies
 $f \le |C| -|C \cap S| -1 < n- k$ edges.
\end{proof}
\fi

We can state the main result of this section.

\begin{thm} 
The price of explainability for the  maximum-spacing problem is
$\Theta(n-k)$.
\end{thm}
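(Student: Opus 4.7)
The plan is to obtain the theorem by immediately combining the two bounds already established earlier in the section. Recall that, for a maximization objective, the price of explainability is defined as $\rho(\mathcal{P}) = \max_I \{ OPT_{unr}(I)/OPT_{exp}(I)\}$, so proving $\Theta(n-k)$ reduces to matching lower and upper estimates.

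For the lower bound direction, I would simply cite the preceding lemma: the explicit L-shaped construction $C_1 = \{(0,i)\} \cup \{(i,0)\}$ together with $k-1$ faraway singletons $C_2,\ldots,C_k$ yields an instance of size $n = 2p+k$ on which the unrestricted optimum achieves spacing $p-1 = \Theta(n-k)$, whereas every axis-aligned partition is forced to split two unit-distance points of $C_1$ (since otherwise $C_1 \cup C_2$ would live in a single leaf, contradicting the leaf count), so $OPT_{exp} = 1$. This gives $\rho \geq \Omega(n-k)$.

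For the upper bound, I would invoke the correctness lemma of \texttt{Ex-SingleLink}: using an optimal unrestricted $k$-clustering as a guide, the algorithm produces an explainable $k$-clustering of spacing at least $OPT/(n-k)$, and therefore $OPT_{unr}/OPT_{exp} \leq n-k$ on every instance, i.e.\ $\rho \leq O(n-k)$.

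Putting these two inequalities together proves $\rho(\text{maximum-spacing}) = \Theta(n-k)$, which is the claim. So the theorem itself has essentially no content beyond the two lemmas; the real obstacle lay in the upper-bound lemma, whose proof requires the Kruskal-forest argument showing that, whenever no ``good'' axis-aligned cut exists in the selected cluster $C$, a Jensen-type inequality over the forest edges would produce an edge of length at least $OPT/f$ with $f \le n-k$, contradicting the short-edge construction of the forest.
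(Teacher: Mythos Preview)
Your proposal is correct and matches the paper's approach exactly: the theorem is stated immediately after the two lemmas with no further argument, so its proof is precisely the combination of the $\Omega(n-k)$ lower-bound construction and the $OPT/(n-k)$ spacing guarantee for \texttt{Ex-SingleLink} that you describe. Your summary of where the real work lies (the Kruskal-forest / Jensen argument inside the upper-bound lemma) is also accurate.
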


\ifICML
	\bibliographystyle{icml2021}
\else
	\bibliographystyle{siam}

\fi

\bibliography{biblio.bib}

\begin{thebibliography}{10}

\bibitem{adadi2018peeking}
{\sc A.~{Adadi} and M.~{Berrada}}, {\em Peeking inside the black-box: A survey
  on explainable artificial intelligence (xai)}, IEEE Access, 6 (2018),
  pp.~52138--52160.

\bibitem{DBLP:journals/ml/AloiseDHP09}
{\sc D.~Aloise, A.~Deshpande, P.~Hansen, and P.~Popat}, {\em Np-hardness of
  euclidean sum-of-squares clustering}, Machine learning, 75 (2009),
  pp.~245--248.

\bibitem{baralisexplainable}
{\sc E.~Baralis, D.~E. Pastor, and M.~Cannone}, {\em Explainable ai for
  clustering algorithms}.

\bibitem{basak2005interpretable}
{\sc J.~{Basak} and R.~{Krishnapuram}}, {\em Interpretable hierarchical
  clustering by constructing an unsupervised decision tree}, IEEE Transactions
  on Knowledge and Data Engineering, 17 (2005), pp.~121--132.

\bibitem{bertsimas2017optimal}
{\sc D.~Bertsimas and J.~Dunn}, {\em Optimal classification trees}, Machine
  Learning, 106 (2017), pp.~1039--1082.

\bibitem{bertsimas2020interpretable}
{\sc D.~Bertsimas, A.~Orfanoudaki, and H.~Wiberg}, {\em Interpretable
  clustering: an optimization approach}, Machine Learning,  (2020), pp.~1--50.

\bibitem{blockeel2000top}
{\sc H.~Blockeel, L.~De~Raedt, and J.~Ramon}, {\em Top-down induction of
  clustering trees}, arXiv preprint cs/0011032,  (2000).

\bibitem{DBLP:journals/jcss/CharikarFGKRS02}
{\sc M.~Charikar, R.~Fagin, V.~Guruswami, J.~Kleinberg, P.~Raghavan, and
  A.~Sahai}, {\em Query strategies for priced information}, Journal of Computer
  and System Sciences, 64 (2002), pp.~785--819.

\bibitem{chavent1999methodes}
{\sc M.~Chavent, C.~Guinot, Y.~Lechevallier, and M.~Tenenhaus}, {\em
  M{\'e}thodes divisives de classification et segmentation non supervis{\'e}e:
  Recherche d'une typologie de la peau humaine saine}, Revue de statistique
  appliqu{\'e}e, 47 (1999), pp.~87--99.

\bibitem{chen2018interpretable}
{\sc J.~Chen}, {\em Interpretable Clustering Methods}, PhD thesis, Northeastern
  University, 2018.

\bibitem{chen2016interpretable}
{\sc J.~Chen, Y.~Chang, B.~Hobbs, P.~Castaldi, M.~Cho, E.~Silverman, and
  J.~Dy}, {\em Interpretable clustering via discriminative rectangle mixture
  model}, in 2016 IEEE 16th International Conference on Data Mining (ICDM),
  IEEE, 2016, pp.~823--828.

\bibitem{dasguptaexplainable-workshop}
{\sc S.~Dasgupta, N.~Frost, M.~Moshkovitz, and C.~Rashtchian}, {\em Explainable
  k-means clustering: Theory and practice}, in XXAI: Extending Explainable AI
  Beyond Deep Models and Classifiers, 2020.

\bibitem{fisher1987knowledge}
{\sc D.~H. Fisher}, {\em Knowledge acquisition via incremental conceptual
  clustering}, Machine learning, 2 (1987), pp.~139--172.

\bibitem{fraiman2013interpretable}
{\sc R.~Fraiman, B.~Ghattas, and M.~Svarc}, {\em Interpretable clustering using
  unsupervised binary trees}, Advances in Data Analysis and Classification, 7
  (2013), pp.~125--145.

\bibitem{frost2020exkmc}
{\sc N.~Frost, M.~Moshkovitz, and C.~Rashtchian}, {\em Exkmc: Expanding
  explainable $ k $-means clustering}, arXiv preprint arXiv:2006.02399,
  (2020).

\bibitem{horel2019computationally}
{\sc E.~Horel and K.~Giesecke}, {\em Computationally efficient feature
  significance and importance for machine learning models}, 2019.

\bibitem{horel2020explainable}
{\sc E.~Horel, K.~Giesecke, V.~Storchan, and N.~Chittar}, {\em Explainable
  clustering and application to wealth management compliance}, 2020.

\bibitem{DBLP:journals/comgeo/KanungoMNPSW04}
{\sc T.~Kanungo, D.~M. Mount, N.~S. Netanyahu, C.~D. Piatko, R.~Silverman, and
  A.~Y. Wu}, {\em A local search approximation algorithm for k-means
  clustering}, Computational Geometry, 28 (2004), pp.~89--112.

\bibitem{kauffmann2019clustering}
{\sc J.~Kauffmann, M.~Esders, G.~Montavon, W.~Samek, and K.-R. M{\"u}ller},
  {\em From clustering to cluster explanations via neural networks}, arXiv
  preprint arXiv:1906.07633,  (2019).

\bibitem{DBLP:journals/siamcomp/LaberMP02}
{\sc E.~S. Laber, R.~L. Milidi{\'u}, and A.~A. Pessoa}, {\em On binary
  searching with nonuniform costs}, SIAM Journal on Computing, 31 (2002),
  pp.~1022--1047.

\bibitem{liu2000clustering}
{\sc B.~Liu, Y.~Xia, and P.~S. Yu}, {\em Clustering through decision tree
  construction}, in Proceedings of the ninth international conference on
  Information and knowledge management, 2000, pp.~20--29.

\bibitem{loyola2020explainable}
{\sc O.~{Loyola-González}, A.~E. {Gutierrez-Rodríguez}, M.~A.
  {Medina-Pérez}, R.~{Monroy}, J.~F. {Martínez-Trinidad}, J.~A.
  {Carrasco-Ochoa}, and M.~{García-Borroto}}, {\em An explainable artificial
  intelligence model for clustering numerical databases}, IEEE Access, 8
  (2020), pp.~52370--52384.

\bibitem{lundberg2017unified}
{\sc S.~M. Lundberg and S.-I. Lee}, {\em A unified approach to interpreting
  model predictions}, in Advances in neural information processing systems,
  2017, pp.~4765--4774.

\bibitem{DBLP:journals/siamcomp/MegiddoS84}
{\sc N.~Megiddo and K.~J. Supowit}, {\em On the complexity of some common
  geometric location problems}, SIAM journal on computing, 13 (1984),
  pp.~182--196.

\bibitem{molnar2020interpretable}
{\sc C.~Molnar}, {\em Interpretable Machine Learning}, Lulu.com, 2020.

\bibitem{dasgupta2020explainable}
{\sc M.~Moshkovitz, S.~Dasgupta, C.~Rashtchian, and N.~Frost}, {\em Explainable
  k-means and k-medians clustering}, in Proceedings of the 37th International
  Conference on Machine Learning, H.~D. III and A.~Singh, eds., vol.~119 of
  Proceedings of Machine Learning Research, PMLR, 13--18 Jul 2020,
  pp.~7055--7065.

\bibitem{murdoch2019interpretable}
{\sc W.~J. Murdoch, C.~Singh, K.~Kumbier, R.~Abbasi-Asl, and B.~Yu}, {\em
  Interpretable machine learning: definitions, methods, and applications},
  arXiv preprint arXiv:1901.04592,  (2019).

\bibitem{scikit-learn}
{\sc F.~Pedregosa, G.~Varoquaux, A.~Gramfort, V.~Michel, B.~Thirion, O.~Grisel,
  M.~Blondel, P.~Prettenhofer, R.~Weiss, V.~Dubourg, J.~Vanderplas, A.~Passos,
  D.~Cournapeau, M.~Brucher, M.~Perrot, and E.~Duchesnay}, {\em Scikit-learn:
  Machine learning in {P}ython}, Journal of Machine Learning Research, 12
  (2011), pp.~2825--2830.

\bibitem{pelleg2001mixtures}
{\sc D.~Pelleg and A.~Moore}, {\em Mixtures of rectangles: Interpretable soft
  clustering}, in ICML, 2001, pp.~401--408.

\bibitem{inconco}
{\sc C.~Plant and C.~B\"{o}hm}, {\em Inconco: Interpretable clustering of
  numerical and categorical objects}, in Proceedings of the 17th ACM SIGKDD
  International Conference on Knowledge Discovery and Data Mining, KDD '11, New
  York, NY, USA, 2011, Association for Computing Machinery, p.~1127–1135.

\bibitem{ribeiro2016should}
{\sc M.~T. Ribeiro, S.~Singh, and C.~Guestrin}, {\em " why should i trust you?"
  explaining the predictions of any classifier}, in Proceedings of the 22nd ACM
  SIGKDD international conference on knowledge discovery and data mining, 2016,
  pp.~1135--1144.

\bibitem{rudin2019stop}
{\sc C.~Rudin}, {\em Stop explaining black box machine learning models for high
  stakes decisions and use interpretable models instead}, Nature Machine
  Intelligence, 1 (2019), pp.~206--215.

\bibitem{saisubramanian2020balancing}
{\sc S.~Saisubramanian, S.~Galhotra, and S.~Zilberstein}, {\em Balancing the
  tradeoff between clustering value and interpretability}, in Proceedings of
  the AAAI/ACM Conference on AI, Ethics, and Society, 2020, pp.~351--357.

\bibitem{DBLP:conf/icml/VidalS20}
{\sc T.~Vidal and M.~Schiffer}, {\em Born-again tree ensembles}, in Proceedings
  of the 37th International Conference on Machine Learning, H.~D. III and
  A.~Singh, eds., vol.~119 of Proceedings of Machine Learning Research,
  Virtual, 13--18 Jul 2020, PMLR, pp.~9743--9753.

\bibitem{DBLP:books/daglib/0030297}
{\sc D.~P. Williamson and D.~B. Shmoys}, {\em The Design of Approximation
  Algorithms}, Cambridge University Press, 2011.

\end{thebibliography}

\end{document}

\section{draft-1}

Let $c^1,\ldots,c^k$ be the reference
centers sorted by coordinate $i$ that is
$c^{j}_i < c^{j_+1}_i$ for $i=1,\ldots,k-1$.
Let ${\cal F}$ be the family of the decision trees that obtain  a $k$-clustering for the points
in ${\cal X}$ by only using cuts 
 that lie at the midpoint of
the interval $[c^{j}_i, c^{j+1}_i]$ for some $j \in [k-1]$.

Let $t_j$ be the number of points in ${\cal X}$  that are separated from their centers 
by the  cut that lies in the midpoint of the interval
$[c^{j}_i, c^{j+1}_i]$.
For a decision tree ${\cal D}'$ in  
${\cal F}$ let $cut(v)$ be the interval associated with the cut employed on node $v$,
that is, if the cut $\theta_i= (c^{j}_i+ c^{j+1}_i)/2$ is used at node $v$ then $cut(v)=j$.
We have that
\begin{equation}
\label{eq:UB14Nov}
UB_i({\cal D}') = \sum_{v \in {\cal D}' } t_v diam(v)_i  \le \sum_{v \in {\cal D}'}  t_{cut(v)} diam(v)_i, 
\end{equation}

where the inequality holds  because  some of the points that contribute to
$t_{cut(v)}$ could have been separated from their centers before reaching $v$.
We have that
\begin{equation} \sum_{v \in {\cal D}' } t_{cut(v)} diam(v)_i = \sum_{j=1}^{k-1} (c_i^{j+1}-c^j_i) CostPath({\cal D}',j) , 
\label{eq:UB13Nov}
\end{equation}

where $ CostPath({\cal D}',j)$ is obtained by adding $t_{cut(u)}$ for every node $u$
that is ancestor in ${\cal D}'$ of the node associated with $j$.
Note that the node associated with $j$ is also included.

\remove{ 
For a decision tree ${\cal D}'$ in  
${\cal F}$  define the competitive ratio of 
${\cal D}'$  as 
$$ CR({\cal D}')=\max_{j \in [k-1]} \left \{ \frac{CostPath({\cal D}',j)}{t_j} \right \}, $$
}

The decision tree ${\cal D}_i$ is the one in family ${\cal F}$ for which the 
righthand side of the above equation is 
minimum $\ref{eq:UB13Nov}$. It can be computed in $O(k^3)$ time via a standard dynamic programming algorithm.
Moreover,   Theorem XXX from Charikar et. al \cite{DBLP:journals/jcss/CharikarFGKRS02} establishes that there exists  a decision tree
for which $CostPath({\cal D}',j) \leq (\log k +o(\log k) t_j$ for every $j$.
Thus,  it follows from (\ref{eq:UB14Nov}) and  (\ref{eq:UB13Nov}) that 
\begin{equation}
UB_i({\cal D}_i) \le \log k \left ( \sum_{j=1}^{k-1} (c_i^{j+1}-c^j_i)t_j \right)
\label{eq:16nov}
\end{equation}

This bound is useful due to the following result that gives a lower bound
on the contribution of coordinate $i$ to cost of the optimal unrestricted clustering.

\begin{lemma} Let $OPT_i$ be contribution of the coordinate $i$ for the cost of
the optimal unrestricted clustering. Then,
\begin{equation}  OPT_i= \sum_{ x \in {\cal X} } |x_i - c(x)_i| \geq \sum_{j=1}^{j-1} 
\frac{(c^{j+1}_i-c^j_i) t_j}{2},
\end{equation}
where $t_j$ is the number of mistakes of the cut that lies in the midpoint of the interval
$[c^{j}_i,c^{j+1}_i]$ and $c(x)$ is the reference center associated with $x$.
\end{lemma}
 
Thus, it follows from inequality (\ref{eq:16nov}) and the previous lemma that
\begin{equation}
\label{eq:bound_UBi}
\frac{ UB_i({\cal D}_i) }{ OPT_i } \in O(\log k)
\end{equation}

Now we define how to select the coordinate $i$ in the construction of 
decision tree ${\cal D}$.

\remove{
\begin{lemma}
Let $c^1,\ldots,c^k$ be the reference centers sorted  according to coordinate $i \in [d]$, that is,
$c^i_j \le c^{i+1}_j$ for $j=1,\ldots,k-1$.
Moreover, let $t_j$ be the minimum of mistakes produced by some cut
$\theta_i=\alpha$, where $\alpha \in [c_i^j,c^{j+1}_i]$. Then,
$$ \sum_{j=1}^{k-1} (c_{j+1}-c_{i}) t_v \le  \sum_{x \in {\cal X}}  |x_i-c(x)_i| $$  
\end{lemma}
}

\remove{

\begin{lemma}
If the decision tree ${\cal D}$ is built by  selecting, for each node $u$,  the coordinate $i$ 
for which $t$ is minimum then the cost of the clustering obtained by ${\cal D}$ satisfies 
$$cost( {\cal D}) \leq \min \{ d \log^2 k , k \} OPT_{unr}  $$
\end{lemma}
\begin{proof}
The idea is to compare 
$$ UB({\cal D}) \le \sum_{i=1}^d \sum_{ v \in {\cal D}_i  \cap {\cal D} } t_{cut(v)} diam(v)  = $$
$$ \sum_{i=1}^d \sum_{ v \in {\cal D}_i  \cap {\cal D} } t_{cut(v)} \left ( \sum_{j=1}^d diam(v)_j  \right )$$
\end{proof}
}

\section{Draft-0}

The following lemma that gives a lower bound on the cost of
the reference cluster will be useful. 
The proof uses the same ideas employed in Lemma 6 of \cite{dasgupta2020explainable}.

\begin{lemma}
\begin{equation}  \sum_{ x \in {\cal X} } |x_i - c(x)_i| \geq \sum_{j=1}^{j-1} (c^{j+1}_i-c^j_i) t_j,
\end{equation}
where $t_j$ is the number of mistakes of the cut that lies in the midpoint of the interval
$[c^{j}_i,c^{j+1}_i)]$
\end{lemma}

In the light of the previous lemma  it is enough to show  that the 
tree ${\cal D}_i$ satisfies

$$ cost({\cal D}_i) \le \log k \sum_{j=1}^{j-1} (c^{j+1}_i-c^j_i) t_j $$
 
Now, we explain how to build the decision tree ${\cal D}_i$.
The tree ${\cal D}_i$ optimizes the function 
$$ \sum_{j=1}^{k-1} |c^{j+1}_i - c^j_i| cost({\cal D}_i,j), $$ 
where $  cost({\cal D}_i,j) $ is the sum of the costs of the nodes
in the path from the root of the tree to the node associated with interval $j$.
These tree can be built by a dynamic programming algorithm.

In \cite{}, Charikar et al shows that there is a tree
for which 
$$ cost({\cal D},j) \le (2\log k)  t_j, $$
for every $j$.
This implies that the cost of ${\cal D}_i$ 
is at most
$$\log k  \sum_{j=1}^{k-1} (c^{j+1}_i - c^j_i) t_j $$

   \section{Draft}

\begin{lemma}
The decision tree ${\cal D}_i$ for
which 
$$ \sum_{v \in  {\cal D}_i }  t_v \cdot diam_i (v) \le O( \log^2{k}) \times \left ( \sum_{j=1}^{k-1} (c^{j+1}_i-c^j_{i}) t_j \right), $$
where $t_j$ is the minimum number of mistakes produced by
a cut that lies in the interval $[c^{j}_i,c^{j+1}_{i}]$.
 \end{lemma}

\begin{proof}
Let $2p$ be number of subintervals that we obtain when the
interval $[c^1_i,c^k_i]$ is split into subintevals of size 
$\delta/2$ where $\delta$ is the distance between the two closest
centers with respect to coordinate $i$.
Let $S_a^b= \sum_{\ell=a}^b t(I_\ell)$ and
let $S=\sum_{\ell=1}^{2p} t(I_\ell)$.
where $t(I_\ell)$ is the number of mistakes produced by the cut that lies in the midpoint of interval
$I_\ell$. 

Let $v$ be a positive parameter whose value will be set on the analysis.
Consider an algorithm
that first find  the  interval
$I_j$ for which 
$$ \frac{ t(I_j) }{ \min\{j^v,(2p-j)^v\} } $$ 
Next, the algorithm creates the root of a decision tree using the 
cut that produces the minimum number of mistakes among
those that lie in the interval defined by the two centers that
are separated by the cut of $I_j$.
Then, it recurses for the left and right set of centers.

We assume w.l.o.g. that $j \le p$.
We have that 
$$ \frac{cost({\cal D})}{S} \le \frac{ t_j 2p + cost(\ell-1) + cost(k- \ell) }{ S} \le $$
$$  \frac{t_j 2p + S_{1}^{j'} \cdot f(\ell-1) + S_{j+1}^p \cdot f(k-\ell) }{ S }  \le $$
$$  \frac{t_j 2p + S_{1}^{j'} \cdot f(\ell-1) + (S- S_1^{j'} -  S_{j'+1}^{j-1} ) \cdot f(k-\ell) }{ S }  \le $$
$$  \frac{t_j 2p + (S_{1}^{j'}+S_{j'+1}^{j-1}) \cdot f(\ell-1) + (S- S_1^{j'} -  S_{j'+1}^{j-1} ) \cdot f(k-\ell) }{ S }  \le $$

$$  \frac{t_j 2p + S_{1}^{j-1} \cdot ( f(\ell-1) -  f(k-\ell)) }{ S } + f(k-\ell)) $$

The above function is maximized when $S_{1}^{j-1}$ and $S_{j+1}^{p}$ are at their minimum values.
By the choice of $j$ we have
$$ S \ge \frac{  2 t_j \sum_{\ell=1}^{p}  \ell^v }{ j^v } \ge \frac{ 2 t_j  p^{v+1}}{v \cdot j^v} $$
and
that 
$$ S_{1,j-1} \ge \frac{  2 t_j \sum_{\ell=1}^{j-1}  \ell^v }{j^v} \ge \frac{ 2 t_j  j^{v+1}}{v j^v }  $$
Replacing these bounds in \ref{} we get that 

$$ \frac{cost({\cal D})}{S} \le 
\frac{v \cdot j^v}{p^v} + ( f(\ell-1) -  f(k-\ell))\frac{v \cdot j^{v+1}}{p^{v+1}}
 + f(k-\ell))
$$

Taking the derivative of the right hand side with respect to $j$ we
get that tht maximum occur either at 
$$j=\frac{p v^2}{ (v+1) (f(k-\ell)-f(\ell-1)}$$
when $(v^2)/(v+1) < f(k-\ell)-f(\ell-1)$ 
and at  $j=p$, otherwise.

In the first case we obtain

$$ \frac{cost({\cal D})}{S} \le 
\frac{v^{v+1}}{(f(k-\ell)-f(\ell-1))^v} 
 + f(k-\ell))
$$
Thus, it is enough to show that

$$ (f(k-\ell)-f(\ell-1))^v (f(k)-f(k-\ell)) \ge v^{v+1}$$
From the concavity of $f()$
we get that the minimum is reached when 
$\ell$ is maximum. 
Since $(f(k-\ell)-f(\ell-1)=v^2/(v+1)$
it is enough to prove that 
 $$ f(k) - f(k-\ell ) \ge \frac{ (v+1)^v v^{v+1}}{v^{2v}}=v e.$$
For $v= \log k / (\log \log k)$
 and $f(k)=e \log^2 k $
 the inequality holds.
 
It remains to consider the case where $j=p$.
In this case we
need to prove
that 
$$ v (f(\ell-1)-f(k -\ell) + 1) \le  f(k) - f(k-\ell)$$
Since $f(k-\ell) > f(\ell-1)$ it
is enough to show that
$$v  \le  f(k) - f(k-\ell).$$
This holds when 
$v= \log k / (\log \log k)$
 and $f(k)=e \log^2 k $
 \end{proof}

Here we show that $k$-median admits a $ \sqrt{kd} $ approximation which improves
the bound from \cite{dasgupta2020explainable} when $d= o(k)$.

Given $k$ reference centers $(\mu_1,\ldots,u_k)$, the IMM algorithm \cite{dasgupta2020explainable} selects, at each node,  
 the cut that minimizes the number of mistakes, that is, the number of points that are separated from 
their reference center.
The cost of the  clustering obtained by IMM is $O(H)$ times larger than the cost
of the optimal unrestricted clustering, where $H$ is the height of the decision tree
built by IMM. Since $H$ can be $k$ then IMM provides an $O(k)$ approximation.

For a coordinate $i$ and a set of centers $S$ define  $\mu_{i}^{min}(S)$ (resp. $\mu_{i}^{max}(S)$) as the minumum (resp. maximum)
value of coordinate $i$ among the centers in $S$.
For each $i \in [d]$, we define the span of coordinate $i$ w.r.t  a set $S$  as the real 
interval that starts in $\mu_{i}^{min}(S)$ and ends in $\mu_{i}^{max}(S)$, that is, 
$$span^S_i=[\mu_{i}^{min}(S),\mu_{i}^{max}(S)].$$
In what follows, we use $len(I)$ to denote the length of some real interval $I$.

Let $p$ be a parameter whose value will be set later int the analysis.
If the current node $v$ has at most $2p$ centers then the IMM is executed to
cluster the points of this node. Otherwise, 
for each coordinate $i \in [d]$ our algorithm obtains  a list of the centers
sorted this coordinate and then splits the centers into 
3 groups:
$L_i=\{\mbox{first }p \mbox{ centers in } u \} $,
$R_i=\{\mbox{last }p \mbox{ centers in } u\}$ and the set $M_i$ of  consecutive centers 
that starts with the last center in $L_i$ and terminates including the first center of $R_i$.
Next, it divides the coordinates into 2 groups: the group $A$ has
the coordinates for which 
$$ len( span^{M_i}_i ) \ge len( span^{L_i}_i ) + len( span^{Ri}_i )$$
and the group $B$ has the remaining coordinates. 
If $$ \sum_{i \in A}  len( span^{M_i}_i )  \ge \sum_{j \in B}  len( span^{L_i}_i ) + len( span^{R_i}_i )$$
then the algorithm selects the cut that that makes the minimum number of mistakes among
those that lies in $\bigcup_{i \in A}  span^M_i$.
Otherwise, it selects the two consecutive centers among those that lie
$\bigcup_{i \in B}  ( span^L_i  \cup span^R_i) $ and make a cut in the midpoint of them.
Note that the two selected centers should be consecutive in either $L_i$ or $R_i$ for some 
$i \in B$.

For a node $u \in T$, let  $diam_1(u)$ be
the diameter of its bounding box  according to the $k$-median metric.
In terms of the previous notation, if $S$ is the set of centers that 
lies in $u$ then 
$$diam_1(u) = \sum_{i \in [d]} len(span^S_i). $$

The following lemma from \cite{dasgupta2020explainable}, expressed in our notation,  will be useful for our analysis.

\begin{lemma}[ \cite{dasgupta2020explainable}]
If an algorithm takes centers $\mu_1,\ldots,\mu_k$ and returns a
tree $T$ that incurs $t_u$ mistakes at node $u \in T$, then
$$cost(T) \le  cost(\mu_1,\ldots,\mu_k) + \sum_{u \in T} t_u diam_1(u)$$
\end{lemma}

In \cite{dasgupta2020explainable} it is proved that if
 $t^{min}_u$ is the minimum number  mistakes produced
 by any axis-aligned cut that separated at least two centers that lie
 in $u$ then
$$t^{min}_u diam_1(u) \le    \sum_{x \in {\cal X}^{corr}_u}  ||x-c(x)||_1,$$
where ${\cal X}^{corr}_u$ is the set of points in  $u$ that were not separated from their centers
in the ancestors of $u$.
Since  the nodes at the same level of the tree induces a partition of the dataset
${\cal X}$ then  they concluded that 
$cost(T)$ is $O( H)$.

In order to analyze our algorithm we bound 
$t_u diam_1(u)$ using a different approach.
The following lemma is a key result from obtaining our bound.

\remove{
The following result  appears implicitly in the analysis of Lemma  5.6 of
\cite{dasgupta2020explainable}.
}
\remove{
\begin{lemma}
Let $S$ be a subset of the centers that reach node $u$ and that
are consecutive with respect to a coordinate $i \in [d]$.
Moreover, let $t$ be the minimum number of mistakes produced by a cut that lies
in $ span^S_i$.
Then, 
$$t \cdot |span^S_i| \le  \sum_{x \in {\cal X}^{corr}_u}  ||x-c(x)||_1 $$  
\end{lemma}
}

\begin{lemma}
Let ${\cal X}^{sep}_u  $ 
be the set of points that are separated from their centers  by the  cut chosen at node $u$. Moreover,
 let $t_u=|{\cal X}^{sep}_u|$. 

If the chosen cut lies in  $span^L_{i} \cup  span^R_{i}$ 
for some $i \in B$ then 
 $$t_u diam_1(u)  \leq d \cdot p \sum_{x \in {\cal X}^{sep}_u} ||x-c(x)||_1 $$   
On the other hand, if it lies in $span^M_{i}$ for some $i \in A$ then 
$$t_u diam_1(u)  \leq   \sum_{x \in {\cal X}^{corr}_u}  ||x-c(x)||_1 $$   
\end{lemma}
\begin{proof}
First we consider the case in which the cut lies
 $span^L_{i} \cup  span^R_{i}$ 
for some $i \in B$. 
In this case, the chosen cut, say $\theta$, is a midpoint of two consecutive centers according
to coordinate $i$. Let $\Delta$ be distance between these two centers with respect
to coordinate $i$. We have that each point in $u$ that is separated
from its reference center due to $\theta$ has distance at least  $\Delta/2$.
Thus, 
$$\sum_{x \in {\cal X}^{sep}_u}  |x_i-c(x)_i| \ge \frac{\Delta t_u}{2}. $$
Since $\Delta$ is the smaller than or equal to maximum distance between two consecutive centers in $L_j$ and $R_j$ 
for $j \in B$ and
$len(span_j^{L_j})+len(span_j^{R_j}) \geq diam(u)_j$  we have that
$ diam(u)_j \leq 2(p-1) \Delta$ .
Putting together we get that
$$  \sum_{j \in B} diam(u)_j \le|B|  2(p-1) \Delta .$$
In addition, since $i$ is chosen it follows
$$  \sum_{j \in A} diam(u)_j)/2 \le 2  \sum_{\ell \in B} diam(u)_\ell $$
Thus,
$$  \sum_{j \in [d]} diam(u)_j \le 5d \cdot p \cdot \Delta .$$
Hence, it follows from inequality \ref{}
that

$$ t_u \sum_{j \in [d]} diam(u)_j \le (5d \cdot p) \cdot \sum_{x \in {\cal X}^{sep}_u} |x_i - c(x)_i|.$$  

Now, we consider the case
in which some $i \in A$ is chosen.
For every $i \in A$ we
have 
$$ t_u diam(u)_i \le 2 span^{M_i}_i \le  4 \sum_{x \in {\cal X}^{corr}_u} |x_i - c(x)_i|.$$
In addition we have that  $$ \sum_{i \in B} diam(u)_i \le \sum_{i \in A} 4 diam(u)_i$$
\end{proof}

Let $F$ (resp. $G$) be the set of nodes from the built decision tree
in which a coordinate from set $A$ (resp. $B$) is used.
By using the above lemma we can bound 
 $$ \sum_{u \in T} t_u diam_1(u) =  \sum_{u \in F} t_u diam_1(u) +$$
$$\sum_{u \in G } t_u diam_1(u).$$
We have that 
$$\sum_{u \in G} t_u diam_1(u) \le \sum_{u \in G}  d \cdot p  \sum_{x \in {\cal X}^{sep}_u} ||x-c(x)||_1 \le $$
$$ d p \cdot \left ( \sum_{x \in {\cal X}} ||x-c(x)||_1 \right ),$$
where the last inequality holds since a point $x$ can be separated from its center at most once.
On the other hand,
$$\sum_{u \in F} t_u diam_1(u) =  \sum_{u \in F} \sum_{x \in {\cal X}^{corr}_u}  ||x-c(x)||_1 $$
We note that a point $x$ contributes to the bound only in
the nodes of $F$ that lie in the path from the root to the leaf that it lies.
We can show  that the number of these nodes is at most $k/p$.
In fact, whenever $x$ reaches a node $u$ in  $A$ that has $f$ centers  then the next node
$x$ reaches will  have at most $f-p$ centers since the set $L$ will be separated from the
set $R$. Thus, $x$ can belong to a  node in $A$
at most $k/p$ times.

By setting $p =\sqrt{k/d}$ we obtain the desired result.

Let $L=(c^1,\ldots,c^k)$ be the list of centers sorted by coordinate $i$.
We say that a sublist $L'$ of consecutive centers of $L$ is {\em light}
if the distance with respect to coordinate $i$ between two consecutive centers in $L'$
   is smaller than $(c^k_i-c^1_i)/k^2$ and $L'$ is maximal with respect to the addition of a center.
We say that a center is {\em heavy} if it   does not belong to a light sublist. 
First, the  algorithm builds a decision tree ${\cal D}^L_i$  where each node corresponds to a cut
in which  coordinate $i$ is fixed and each leaf corresponds to a region that satisfies one of the following properties:
(i) it contains exactly one heavy center or (ii) it contains exactly 
all the centers of some light sublist. The construction of ${\cal D}^L_i$ will be detailed soon. 
Next, the algortihm recurses to build a decision tree for each of regions
associated with the light sublists.
Finally, for each light sublist $L'$, the leaf associated with $L'$ in ${\cal D}^L_i$ is replaced with the decision tree that is recursively built for $L'$. 

Now we detail the procedure {\tt Build} that is used to construct ${\cal D}^L_i$.
The procedure receives a set of points  ${\cal X}$ including a list of  reference centers $L$
and the set of intervals of intervals induced by .
If $S$ has only one center then nothing is done.
Otherwise, let $S_{min}$ and $S_{max}$ be, respectively,
the minimum and maximum value of coordinate $i$ among centers in $S$.
The  root of the decision is associated with the cut $\theta_i=\alpha$ where
$\alpha$ in the number in the interval $(S_{min}, S_{max})$ for which 
$$\frac{ t(\alpha) }{ \min\{ \alpha - S_{min}, S_{max}- \alpha \} } $$
is minimum
where $t(\alpha)$ is the number of mistakes produced by cut $\theta_i=\alpha$.  The left and the the right decision tree
of the root  are obtained by calling {\tt Build} recursively for
$ \{ x \in {\cal X} |  x_i < \alpha\}$ and  $ \{ x \in {\cal X} |  x_i > \alpha\}$.
In order to construct ${\cal D}^L_i$ {\tt Build} is 
call for $ \{x | x \mbox{ does not lie in the interval induced by some light sublist}\}$.

\begin{proposition}
$$\sum_{v \in {\cal D}^L_i} t(v) diam(v)_1 \le \ln k Cost({\cal X}, HeavyInterval) $$ 
\end{proposition}

\begin{proof}

\end{proof}


Let $I'$ be a heavy interval. 
For every two consecutive centers $c$ and $c'$ in $I'$ 
split the interval between them into 
$ \lfloor 2 k^2  |c-c'|/ len(I) \rfloor$ subintervals of size $len(I)/(2k^2)$. One of the parts could be larger,
with size at most $len(I)/k^2$. Now,
for each meeting point $u$ of two consecutive subintervals in a heavy subinterval $I'$ we associate
a number $t_u$ that indicates the number of point we separate from 
their centers if we split the cluster using a hyperplane $\theta_i=u$.
We have that $$\sum_{u \in heavy} t_u len(I)/k^2$$ is a lower bound on the contribution
of the coordinate $i$ for the cost of the optimal unrestricted clustering.

Now consider the following strategy for clustering the points
in ${\cal X}$ using cuts  where the $i$-th  coordinate is fixed.
We select the meeting point that minimizes $t_u/ \min\{small(u),large(u)\}$,
where $small(u)$ and $large(u)$ are, respectively, the number of meeting points smaller than $u$.
Note that the light intervals are not taken into account to calculate $small(u)$
and $large(u)$. Then, it removes the meeting points that 
have coordinate smaller or larger than all centers in their group and it
recurses until the centers of the current group correspond to light group of centers.
We show that 
$$\sum_{v \in T} t^v diam_i(v)  \le   \ln k^2 OPT $$

We have that 

$$t_u \le \frac{ \min\{small(u),large(u)\} \sum_{u \in Heavy}  t_u}{k^3} $$

\remove{

We shall show that  
If a cluster is obtained via the condition XXX,
then all the points that lie in $C$ has the reference centers
in $C$ as their reference centers.

If a cluster is associated with a hypercube then
the maximum distance from a point to its center is
the diagonal of the hypercube, which is at most
$ (D / (p-1)) \sqrt{d}$

\remove{
    We say that a cut is {\em clean} if it satisfies the following properties: (i) it is an axis-aligned cut that separates at least two reference centers    that lie in the same cluster;
     (ii) it does not  separate any point from its reference center.  

     Consider  an algorithm
     that  applies a sequence of clan cuts. It stops when either it obtains
     a $k$-clustering or there is no clean cut anymore. If it obtains a $k$-clustering then
     this clustering is equal to the optimal unrestricted clustering since, by definition, no point is separated from its
     reference center. }
      
     Let us consider the scenario where there is no clean cut. In this case, the clustering
     induced by the sequence has less than $k$ clusters. 
     Pick one of the clusters $C$ and let $S$ be the set of reference centers that lie in this 
     cluster.
    For each dimension $i \in [d]$, let $D_i$ be the difference between the minimum and maximum values for the $i$-th component of points in $C$. Then all points in $C$ are in a hypercube of size $D_1 \times D_2 \times \dots \times D_d$. Let $m = \arg \max_i D_i$ be a dimension for which the side of the hypercube is maximal.
    
    There must be at least one center $\mathbf{c}$ with two points $\mathbf{x}, \mathbf{x}'$ associated to it such that $x_m - x'_{m} \geq D_m/|S|$. To see why, consider that, if the maximal difference between two points from the same cluster in dimension $m$ is smaller than $D_m/k$, the sum of all $k$ differences would be smaller than $D_m$; and, since $D_m$ is the length of the side (in dimension $m$) of the smallest hyper-rectangle that comprises all points associated to the centers in $S$, there would be a ``gap'' between clusters in dimension $m$ on which a clean cut could be performed.
    
    Therefore, there is at least one point whose distance from its center is at least $D_m/(2k)$, which would be the difference between $\mathbf{c}$ and both points $i, j$ in dimension $m$ if $c_m = \frac{x_m+x'_{m}}{2}$. So the optimal solution has cost at least $D_m/(|S|)$.
    
    Now divide the hyper-rectangle in a grid with $k$ cells of exactly the same size, each with dimensions $D_1/\sqrt[d]{k} \times D_2/\sqrt[d]{k} \times \dots \times D_d/\sqrt[d]{k}$. Each cell will be a cluster, with the maximum difference between any two points in a cluster being
\begin{eqnarray}
\sqrt{\sum_{i=1}^d \left(\frac{D_i}{k^{1/d}} \right)^2} \le 
\sqrt{ d \left(\frac{D_m}{k^{1/d}} \right)^2 } \le
\sqrt{ d } \left(\frac{D_m}{k^{1/d}} \right ) 
\end{eqnarray}    
Thus, the ratio between the cost of the optimal explainable clustering and that of the optimal unrestricted one
is at most 
\begin{gather*}
        \frac{\frac{\sqrt{d}D_m}{\sqrt[d]{k}}} {\frac{D_m}{2k}} = 2\sqrt{d}k^{\frac{d-1}{d}}. \qedhere
    \end{gather*}
}